\documentclass[10pt,twocolumn,letterpaper]{article}

\usepackage{cvpr} 
\usepackage{CJK}
\usepackage{multirow}
\usepackage{multicol}
\usepackage{makecell}
\usepackage{color}
\usepackage{xcolor}
\usepackage{colortbl}
\usepackage{bm}
\usepackage{amsmath}
\usepackage{extarrows}
\usepackage{float}
\usepackage{amsmath, amssymb, amsthm}
\usepackage[most]{tcolorbox} % 引入最强大的彩色框包
\usepackage{xcolor}
\usepackage{mdframed} % 用于加框 (可选)
\usepackage{bbm}
\usepackage{algorithm}
\usepackage{algorithmic}
\usepackage{amsmath}
\definecolor{theoremcolor}{RGB}{0, 0, 139} % DarkBlue
\definecolor{defcolor}{RGB}{0, 100, 0}     % DarkGreen
\definecolor{thmblue}{RGB}{0, 0, 100}      % 深蓝色 (用于定理/引理)
\definecolor{thmback}{RGB}{240, 245, 255}  % 极淡的蓝色背景
\definecolor{defgreen}{RGB}{0, 60, 0}      % 深绿色 (用于定义)
\definecolor{defback}{RGB}{240, 255, 240}  % 极淡的绿色背景
\definecolor{obsorange}{RGB}{100, 40, 0}   % 深橙褐色 (用于观察)
\definecolor{obsback}{RGB}{255, 250, 240}  % 极淡的橙色背景

\newtheoremstyle{cvprthm}
  {3pt}   % Space above
  {3pt}   % Space below
  {\itshape} % Body font
  {}      % Indent amount
  {\bfseries\color{theoremcolor}} % Theorem head font
  {.}     % Punctuation after theorem head
  {.5em}  % Space after theorem head
  {}      % Theorem head spec (can be left empty, meaning 'normal')

\newtheoremstyle{cvprdef}
  {3pt}   % Space above
  {3pt}   % Space below
  {\rmfamily} % Body font (Definition 通常用正体)
  {}      % Indent amount
  {\bfseries\color{defcolor}} % Theorem head font
  {.}     % Punctuation after theorem head
  {.5em}  % Space after theorem head
  {}      % Theorem head spec

\theoremstyle{cvprthm}
\newtheorem{theorem}{Theorem}

\theoremstyle{cvprdef}

\definecolor{cvprblue}{rgb}{0.21,0.49,0.74}
\usepackage[pagebackref,breaklinks,colorlinks,citecolor=brown]{hyperref}

\def\paperID{*****} % *** Enter the Paper ID here
\def\confName{CVPR}
\def\confYear{2026}
\definecolor{supcolor}{HTML}{D2691E}

\newcommand{\statement}[1]{\noindent\textbf{#1}}

\newcommand\blfootnote[1]{%
  \begingroup
  \renewcommand\thefootnote{}\footnote{#1}%
  \addtocounter{footnote}{-1}%
  \endgroup
}
\title{Air-Know: Arbiter-Calibrated Knowledge-Internalizing Robust Network for Composed Image Retrieval}

\author{Zhiheng Fu~~~~Yupeng Hu$^{*}$\blfootnote{~corresponding authors}~~~~Qianyun Yang~~~~Shiqi Zhang~~~~Zhiwei Chen~~~~Zixu Li \vspace{2mm}\\
Shandong University\hspace{1.5cm}\\
{\tt\small \{fuzhiheng8,zivczw,lizixu.cs\}@gmail.com;} \\ 
{\tt\small \{qianyunyang,zhangshiqi\}@mail.sdu.edu.cn; huyupeng@sdu.edu.cn}\\
{\tt\small \url{https://github.com/ZhihFu/Air-Know/}}
}

\begin{document}
% \begin{CJK}{UTF8}{gbsn}

 \twocolumn[{%
\maketitle
\begin{figure}[H]

\hsize=\textwidth 
\centering
\includegraphics[width=\textwidth]{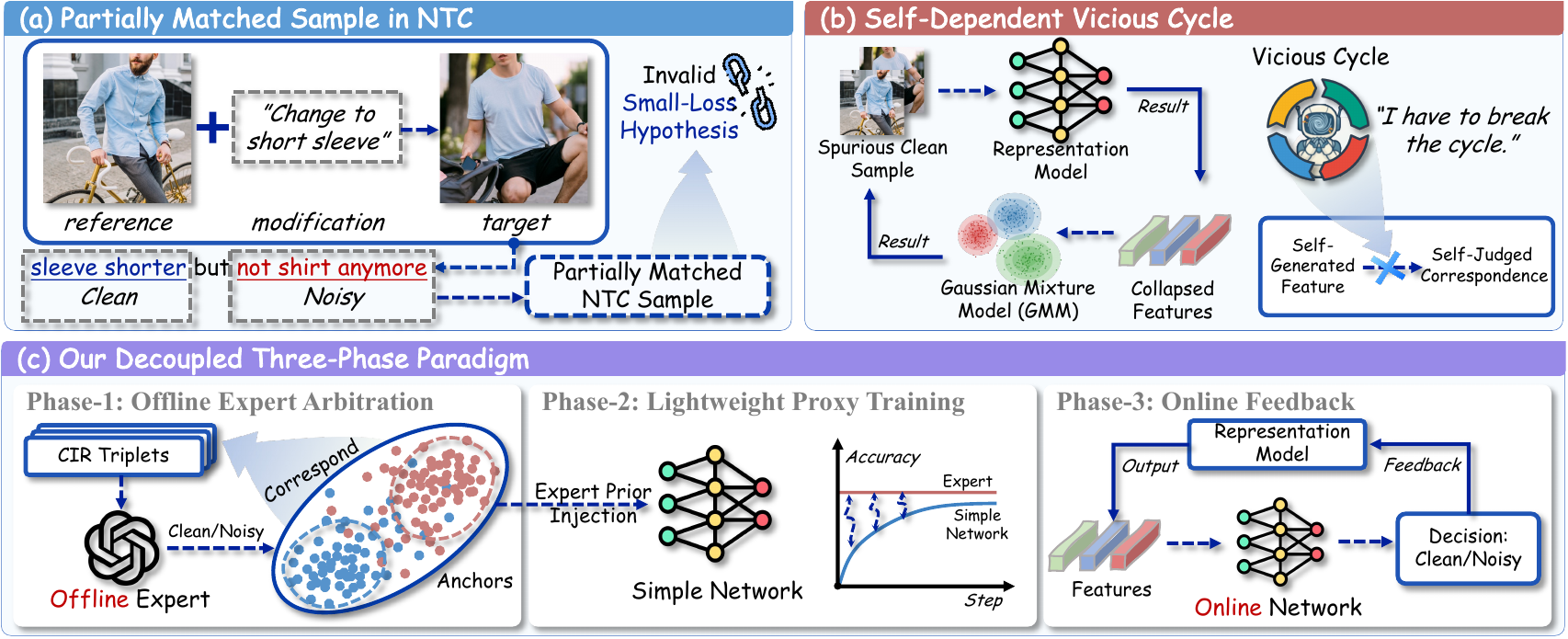}
\vspace{-20pt}
\caption{(a) illustrates the semantic ambiguity of noise in NTC.
(b) illustrates the vicious cycle of self\mbox{-}dependency caused by unreliable noise determination. (c) introduces our proposed ``Expert\mbox{-}Proxy\mbox{-}Diversion'' three-phase learning framework. Figure best viewed in color.}
\label{fig:intro}
\end{figure}
 }]
\blfootnote{$^*$ Corresponding Author: Yupeng Hu}

\begin{abstract}
Composed Image Retrieval (CIR) has attracted significant attention due to its flexible multimodal query method, yet its development is severely constrained by the Noisy Triplet Correspondence (NTC) problem. Most existing robust learning methods rely on the ``small loss hypothesis'', but the unique semantic ambiguity in NTC, such as ``partial matching'', invalidates this assumption, leading to unreliable noise identification. This entraps the model in a self dependent vicious cycle where the \textbf{learner} is intertwined with the \textbf{arbiter}, ultimately causing catastrophic ``representation pollution''. To address this critical challenge, we propose a novel ``Expert\mbox{-}Proxy\mbox{-}Diversion'' decoupling paradigm, named \textbf{Air\mbox{-}Know} (ArbIteR calibrated Knowledge iNternalizing rObust netWork). Air\mbox{-}Know incorporates three core modules: (1) \textit{External Prior Arbitration (EPA)}, which utilizes Multimodal Large Language Models (MLLMs) as an offline expert to construct a high precision anchor dataset; (2) \textit{Expert Knowledge Internalization (EKI)}, which efficiently guides a lightweight proxy ``arbiter'' to internalize the expert's discriminative logic; (3) \textit{Dual Stream Reconciliation (DSR)}, which leverages the EKI's matching confidence to divert the training data, achieving a clean alignment stream and a representation feedback reconciliation stream. Extensive experiments on multiple CIR benchmark datasets demonstrate that Air\mbox{-}Know significantly outperforms existing SOTA methods under the NTC setting, while also showing strong competitiveness in traditional CIR.
\end{abstract}
    
\section{Introduction}
In recent years, Composed Image Retrieval (CIR)~\cite{TME,habit,MELT} has garnered significant attention in the field of information retrieval~\cite{intent,xie2025chat,sprc,STABLE,liu2025queries,QuRe,liu2023retrieval,xie2026conquer,tian2025core,xu2025hdnet,jiang2025transforming,bi2025llava,lu2026reassessing,jiang2024prior}, multimodal learning~\cite{li2024optimizing,lin2026scientific,song7,zhang2026towards,yu2025cotextor,xie2026delving,Long_2026,lin2025se,song3,liu2025fusion,song4,ERASE,tian2025open,bi2025prismselfpruningintrinsicselection} and computer vision~\cite{song1,yu2025yielding,xiao2025visual,xiao2025prompt,li2024videocogqa,song2,zhong2026collaborativemultiagentscriptsgeneration,xie2026hvd,jiang2025self,song6,wang2025ascd,bi2025cot,sun2023hierarchical,yu2026dinov3,10.1145/3746027.3755817}, owing to its flexible paradigm that utilizes a reference image and a modification text to retrieve a target image. Despite significant progress, practicality of CIR is constrained by a critical bottleneck: noisy correspondence within the training data. Due to the high cost and inherent subjectivity of triplet annotation, coupled with the hallucination issues prevalent in annotations by large models, datasets are replete with erroneous matches or matches exhibiting semantic ambiguity. To address this issue, Li et al.~\cite{TME} explicitly formulated this challenge as the Noisy Triplet Correspondence (NTC) problem, focusing their research on training a robust CIR model capable of effectively resisting noise interference.

To address NTC problem, recent works (e.g.,TME~\cite{TME}) have begun to draw upon insights from existing robust methods~\cite{recon,sun2024robust,li2023cross,qin2023cross,yuan2025prototype,li2024incomplete} designed for cross-modal noisy correspondence learning (NCL). These methods typically rely on the small-loss hypothesis~\cite{unsupervised,closer}, which posits that clean samples generate low losses while the noisy yields high losses, and utilize this premise (for instance, via GMM) to partition the samples. However, given that the ``noise'' within NTC exhibits unique semantic ambiguity, the above methods are difficult to directly apply to NTC scenarios.

Specifically, traditional Noisy Label Learning (NLL)~\cite{Feng2023MaskCon,Feng2024NoiseBox} typically deals with ``randomly flipped labels'', while standard Noisy Correspondence Learning (NCL)~\cite{noisy-cross} addresses completely mismatched cross-modal pairs. In contrast, noise in NTC~\cite{TME} uniquely consists of ``partial matches'' characterized by semantic continuity and ambiguity.
For instance, as shown in Figure~\ref{fig:intro}(a), regarding the triplet (reference image: shirt, modification text: change to short sleeves, target image: T-shirt), it constitutes neither a perfectly ``clean'' sample (T-shirt $\neq$ short-sleeved shirt) nor a purely ``noisy'' sample, as they are highly correlated in attributes such as ``upper garment'' and ``short sleeves''. Such ``partial matches''\footnote{Diverging from the definition in TME~\cite{TME}, we define ``partial match'' as a scenario where only one component in multimodal query ($ref$ or $mod$) aligns with the target (simulated by shuffling $ref$ or $mod$), whereas a complete mismatch occurs when neither aligns (simulated by shuffling $tar$).} or ``hard negative samples'' violate the ``small loss hypothesis''. A model under training is likely to assign an extremely low loss value to such samples due to superficial matching (e.g., ``short sleeves''), thus erroneously classifying it as a ``clean'' sample.

Such unreliable noise identification directly leads to catastrophic representation pollution. When the model (i.e., the learner) simultaneously relies on its own unreliable outputs to act as the arbiter (estimating confidence via GMM), it falls into a fatal vicious cycle of self-dependency. As illustrated in Figure~\ref{fig:intro}(b), (1) pseudo-clean samples are erroneously trusted; (2) the model is compelled to align representations of shirts and T-shirts; and (3) this collapse of representation (resulting in decreased accuracy) conversely further deteriorates fitting of GMM, ultimately polluting entire representation space. Consequently, it is imperative to decouple the arbiter from the learner to break this cycle.

However, implementing a reliable decoupled arbiter encounters a challenging practical dilemma. Multimodal Large Language Models (MLLMs)~\cite{qwen2,cogvlm,liu2025uniform,wang2026fbs,li2025curriculum,song5,ma2026stableexplainablepersonalitytrait,fu2026maspo,zhang2026expseek,liu2024synthvlm,wang2026tracking,li2025multi,li2023ultrare}, by virtue of their robust semantic understanding capabilities, serve as ideal external ground-truth arbiter. However, their prohibitively high inference overhead precludes them from being invoked online during training. Consequently, the core challenge in addressing NTC lies in designing a paradigm that not only leverages expert-level judgment of MLLMs but also circumvents their substantial online overhead.

To address the aforementioned issues, we propose an \textbf{A}rb\textbf{I}te\textbf{R} calibrated \textbf{K}nowledge i\textbf{N}ternalizing r\textbf{O}bust net\textbf{W}ork (\textbf{Air\mbox{-}Know}).
As shown in Figure~\ref{fig:intro}(c), (a) we firstly design the \textit{External Prior Arbitration (EPA)} module, which utilizes MLLMs as an offline expert arbiter to construct a high-precision, small-scale anchor dataset $D_{anchor}$. (b) Second, we design the \textit{Expert-Knowledge Internalization (EKI)} module. This module efficiently leverages $D_{anchor}$ to guide a lightweight and robust proxy, enabling it to internalize the discriminative logic of the expert and provide reliable matching confidence. (c) Finally, during the training of the main model, we design the \textit{Dual-Stream Reconciliation (DSR)} module. This module utilizes the matching confidence of the EKI arbiter as a dynamic gating signal to divert the training into a clean alignment stream and a feedback reconciliation stream. Specifically, the clean alignment stream is used to optimize model training, while the feedback reconciliation stream is employed to correct the base representation model. The above three-phase paradigm thoroughly decouples the arbiter from the learner, enabling the model to achieve superior performance in NTC scenarios characterized by semantic ambiguity.

Our contributions are summarized as follows:
\begin{itemize}
    \item We propose Air-Know, a novel three-phase decoupled paradigm. By innovatively combining offline external prior arbitration with online expert-knowledge internalization, this paradigm efficiently achieves the decoupling of the arbiter and the learner in the NTC scenario of Composed Image Retrieval for the first time.
    
    \item We design a feedback mechanism that utilizes an independent confidence arbiter to reverse-calibrate and optimize the base representation model, enabling it to efficiently identify and rectify hard negative samples and partially matched NTC sample.

    \item Extensive experiments on multiple Composed Image Retrieval benchmark datasets, under both Noisy Triplet Correspondence and traditional settings, demonstrate that our method significantly outperforms all existing state-of-the-art methods while maintaining strong competitiveness in traditional Composed Image Retrieval tasks.
\end{itemize}

\section{Related Work}
\statement{Composed Image Retrieval with Noisy Correspondence.}
Composed Image Retrieval (CIR)~\cite{PAIR,MEDIAN} aims to retrieve a target image via a multimodal query, including a reference image and a modification text~\cite{FashionIQ,cirr,FineCIR}. Research on CIR is expected to contribute to various applications, such as semantic understanding~\cite{wang2024computing,liu2026chartverse,liu2025stole,xiao2026not,jiang2026foeforesterrorsmakes,zhang2024cf,wang2026eeo,lai2026transformers,yu2025visualizing}, and multimodal learning~\cite{lin2026mmfinereason,yuan2025video,zhang2026decoding,zhang2023multi,cheng2026enhancing,wang2024twin,10888444,zeng2025janusvln}.
Recent methods typically utilize pre-trained models such as CLIP~\cite{clip} and BLIP-2~\cite{blip-2} for feature alignment and composition~\cite{REFINE,retrack}, achieving significant progress. However, the problem of Noisy Triplet Correspondence (NTC)~\cite{TME}, which is prevalent in real-world data, remains inadequately addressed. Unlike traditional Noisy Correspondence Learning (NCL)~\cite{noisy-label-1, noisy-label-2, noisy-cross, noisy-reid, noisy-vlm}, NTC in CIR involves semantic inconsistency within triplets, presenting greater complexity than the false positive sample problem~\cite{cala,css-net,HINT} previously investigated in CIR tasks. Recently, works such as TME have begun to address this issue. 
Although TME~\cite{TME} has introduced specialized matching mechanisms to explicitly accommodate NTC in CIR, its identification of NTC still primarily draw upon the small loss hypothesis from Noisy Correspondence Learning~\cite{noisy-nips, matching-NDC-1, matching-NDC-2}, which assumes that clean samples produce low losses while noisy samples generate high losses, and utilize this premise (via GMM) to partition samples. 
While the issue of unreliable noise identification leading to error accumulation has been recognized and mitigated in other domains, such as using co-teaching~\cite{co-teaching} or confident mask mechanisms~\cite{cui2024correlation,jing2023category,jing2023multimodal}, these strategies are difficult to directly apply to the NTC scenario in CIR~\cite{TME}.
However, due to the phenomenon of semantic ambiguity, the aforementioned methods struggle to effectively resolve the NTC problem in CIR. In contrast, our proposed Air-Know framework effectively resolves the issue of representation pollution by constructing an Expert-Proxy-Diversion three-phase learning framework.

\statement{Uncertainty Modeling and Variational Inference.} 
Bayesian Neural Networks (BNNs)~\cite{bnn2} provide a rigorous framework for uncertainty modeling in deep learning~\cite{yuan2025autodrive,dong2025aurora,yuan2026if,zeng2025FSDrive,Xie_2025_ICCV} by learning the posterior distribution of parameters rather than relying on deterministic point estimation~\cite{bayesian_uncertainty1,song13,bayesian_uncertainty2,bayesian_uncertainty3,song9,yu2025iidm,ma2025mutuallearninghashingunlocking,dong2026neureasonerexplainablecontrollableunified}. However, the practical implementation of BNNs faces a significant challenge: the posterior distribution of parameters in complex deep networks is almost always computationally intractable~\cite{bnn_posterior1,bnn_posterior2}. Variational Inference (VI)~\cite{variational_inference_proposed} is a prominent class of approximate inference algorithms proposed to address this issue. By introducing a parameterized variational distribution, it transforms the inference problem into an optimizable objective, thereby becoming the most prevalent approximate solution~\cite{bayesian_uncertainty1,VI1,VI2}. Our work is closely related to this context. Due to the sparsity of the anchor dataset, and to learn a robust noisy geometric discrimination boundary, we employ the VI framework to model the posterior distribution of network parameters.
\section{Methodology}

\begin{figure*}[ht!]
  \centering
  % \fbox{\rule{0pt}{2in} \rule{0.9\linewidth}{0pt}}
     \vspace{-5pt}
   \includegraphics[width=0.97\linewidth]{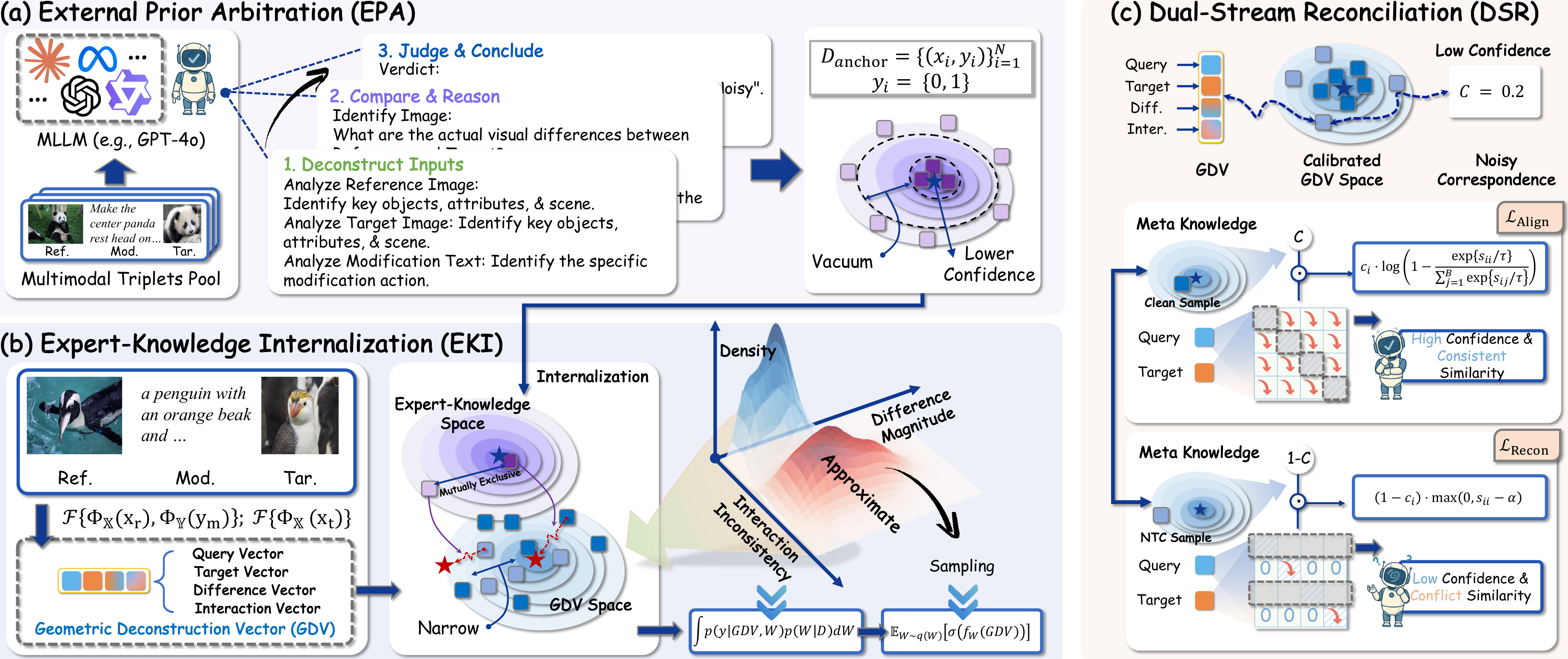}
   \caption{The proposed Air-Know consists of three primary modules: (a) External Prior Arbitration leverages an offline multimodal expert to generate reliable arbitration priors for CIR triplets, bypassing the unreliable small-loss hypothesis. (b) Expert-Knowledge Internalization transfers these priors into a lightweight proxy network, structurally preventing the memorization of ambiguous partial matches. Finally, (c) Dual-Stream Reconciliation dynamically integrates the internalized knowledge to provide robust online feedback, guiding the final representation learning. Figure best viewed in color.}
   \vspace{-15pt}
   \label{fig:framework}
\end{figure*}
We propose Air-Know, a three-phase framework. As illustrated in Figure~\ref{fig:framework}, Air-Know consists of three primary modules: (a) \textit{External Prior Arbitration} (EPA, introduced in Sec~\ref{sec:EPA}), (b) \textit{Expert-Knowledge Internalization} (EKI, introduced in Sec~\ref{sec:EKI}), (c) \textit{Dual-Stream Reconciliation} (DSR, introduced in Sec~\ref{sec:DSR}).

\subsection{Problem Formulation}
\label{subsubsec:Problem Formulation}
The Composed Image Retrieval task aims to retrieve a unique matching target image $I_t$ from a large-scale image gallery $\mathcal{D}_{gallery}$ based on a multimodal query (i.e., a reference image $I_r$ and a modification text $T_m$). A training sample is typically represented as a triplet $(I_r, T_m, I_t)$. To simulate the scenario where the training set contains extensive noisy correspondence (i.e., the Noisy Triplet Correspondence scenario), following previous studies~\cite{TME}, we simulate noisy correspondence by randomly shuffling the elements of a certain proportion of samples within the training set. Therefore, our objective is to learn a robust embedding function $\mathcal{F}$ that aligns the mapped multimodal query feature $\mathbf{z}_q$ with the corresponding target feature $\mathbf{z}_t$ in a shared metric space $\mathbb{R}^D$, while minimizing the interference of noisy correspondence during training.

\subsection{External Prior Arbitration (EPA)}
\label{sec:EPA}
To avoid the interference of noisy correspondence on training, it is critical to correctly distinguish between clean and noisy triplets. However, in the NTC problem of the CIR task, such ground truth labels are completely absent. Consequently, we employ the MLLM as an offline expert arbiter to annotate a small portion of data offline, thereby constructing a high precision and small scale anchor dataset $D_{anchor}$. To achieve the aforementioned goal, we propose a three stage cross validation strategy. Specifically, we utilize GPT-4o~\cite{gpt4} to execute the following sequential tasks.
\noindent \textbf{Step1: Deconstruct Inputs}. Initially, the MLLM independently deconstructs the three inputs of the triplet. It separately parses the visual content of the reference image $I_r$ and the target image $I_t$, while comprehending the modification intent of the given instruction $T_m$.
\noindent \textbf{Step2: Compare \& Reason}. This constitutes the core phase of the determination. By comparing ${I}_{r}$ and ${I}_{m}$, the MLLM infers the actual visual changes that have occurred, denoted as the inferred instruction $\Delta {T_I}$. Subsequently, it performs cross-validation to compare the semantic consistency between the given instruction $T_m$ and the inferred instruction $\Delta {T_I}$. This verification allows for the disregard of minor discrepancies, such as fine-grained pose adjustments. If the two remain unmatched, the MLLM further diagnoses the root cause of the NTC. For instance, it may determine the issue as an instruction mismatch (where $T_m$ is completely unrelated to $\Delta {T_I}$) or a reference image mismatch (where $I_r$ is irrelevant to the transformation process described by $T_m$ and $I_t$).
\noindent \textbf{Step3: Judge \& Conclude}. Finally, based on the reasoning and diagnosis from Step 2, the MLLM outputs a final binary determination label for the triplet, classifying it as either Clean (indicating a successful match) or Noisy (indicating a match failure with a diagnosed NTC type).

The aforementioned process is formulated as follows,
\begin{equation}
    y_i = MLLM(I_r,T_m,I_t),
\end{equation}
where $y \in \{0, 1\}$, denotes NTC sample and clean sample, respectively.

\statement{Anchor Dataset Generation.}
We randomly sampled a certain proportion of data from the training dataset, following the aforementioned process, obtained the anchor dataset,
\begin{equation}
\mathcal{D}_{anchor} = \{({I}_{r}^{(j)},{T}_{m}^{(j)},  {I}_{t}^{(j)}), y_j\}_{j=1}^{M},
\end{equation}
where $M$ denotes the number of samples in $\mathcal{D}_{anchor}$.

\subsection{Expert-Knowledge Internalization (EKI)}
\label{sec:EKI}
Given that the high inference cost of EPA limits its real-time application in large-scale training, we design the Expert-Knowledge Internalization (EKI) module as a lightweight proxy. EKI aims to learn and internalize the discriminative logic of the expert from the limited anchor dataset for noise discrimination on the entire training set.

\statement{Input Feature Engineering}. 
First, we leverage BLIP-2's Q-Former for multimodal feature extraction and composition. Given reference image $I_r$ and modification text $T_m$, the process is formulated as follows,
\begin{equation}
\fontsize{9pt}{9pt}
\hat{\mathbf{z}}_q=\operatorname{Q-Former}(\varPhi_\mathbb{X}(I_r),\varPhi_\mathbb{Y}(T_m)),
\end{equation}
where $\hat{\mathbf{z}}_q\in\mathbb{R}^{Q\times D}$ represent the composed query feature. $Q$ is number of Q-former's learnable queries, and $D$ is the feature dimension. $\varPhi_\mathbb{X}$ and $\varPhi_\mathbb{Y}$ denote BLIP-2's frozen visual encoder and tokenizer respectively. Similarly, for target image $I_t$, we obtain its feature $\hat{\mathbf{z}}_t\!=\!\operatorname{Q-Former}(\varPhi_\mathbb{X}(t))\!\in\!\mathbb{R}^{Q\times D}$. We then obatin the average pooled features $\mathbf{z}_q = AvgPool(\hat{\mathbf{z}}_q)$ and $\mathbf{z}_t = AvgPool(\hat{\mathbf{z}}_t)$.

Since $\mathbf{z}_q$ and  $\mathbf{z}_t$ contain only isolated semantics and lack explicit interaction signals to directly indicate matching degree, we construct Geometric Deconstruction Vector (GDV), $\mathbf{z}_{GDV}$.
It captures matching evidence from two complementary dimensions of global discrepancy and fine-grained commonality. GDV is formulated as follows,
\begin{equation}
    \mathbf{z}_{GDV} = \text{Concat}(\mathbf{z}_q, \mathbf{z}_t, \mathbf{z}_q - \mathbf{z}_t, \mathbf{z}_q \odot \mathbf{z}_t),
\end{equation}
where $\mathbf{z}_q, \mathbf{z}_t \in \mathbb{R}^D$ are the query feature and target feature from the backbone network, respectively.

\statement{Bayesian Geometric Separator}. 
Inspired by the geometric separation properties of ReLU networks~\cite{relu_mlp}, we employ a multi-layer ReLU-MLP as a lightweight proxy for the EKI module. Specifically, we denote the MLP function as $f_{\mathbf{W}}(\cdot)$, where $\mathbf{W}$ represents the learnable parameters. The goal of $f_{\mathbf{W}}$ is to learn a non-linear decision boundary within the $\mathbf{z}_{GDV}$ space that effectively partitions the feature manifold into two semantic regions:
\begin{itemize}
    \item \textbf{Forward Region} $\mathcal{S}^+_{\mathbf{W}}$: A region characterized by positive network activations, where samples are identified as reliable correspondence.
    \item \textbf{Suppressed Region} $\mathcal{S}^-_{\mathbf{W}}$: A region where mismatched or ambiguous features are collapsed, resulting in non-positive outputs.
\end{itemize}

Formally, for an input $\mathbf{z}$, the EKI module learns to map clean triplets (i.e., $y=1$ in $\mathcal{D}_{anchor}$) into $\mathcal{S}^+_{\mathbf{W}}$ and push noisy correspondence into $\mathcal{S}^-_{\mathbf{W}}$. This ensures that the finalized prediction $\hat{y} = \sigma(f_{\mathbf{W}}(\mathbf{z}))$ satisfies the arbitration logic, where clean samples yield $\hat{y} > 0.5$ and noisy ones yield $\hat{y} \leq 0.5$. This parameterization allows the EKI module to internalize expert knowledge into $\mathbf{W}$, providing a robust prior for the subsequent representation learning.

However, the extreme sparsity of $\mathcal{D}_{anchor}$ results in severe ill-posedness, i.e., multiple sets of parameters may successfully separate $\mathcal{D}_{anchor}$, yet their generalization capabilities beyond $\mathcal{D}_{anchor}$ can vary significantly. 
Therefore, to identify the most robust geometric boundary, we avoid point estimation for the parameter $\mathbf{W}$ and instead employ Bayesian inference to model its posterior distribution $p(\mathbf{W} | \mathcal{D}_{anchor})$, formulated as follows,
\begin{equation}p(\mathbf{W} | \mathcal{D}_{anchor}) = \frac{p(\mathcal{D}_{anchor} | \mathbf{W}) p(\mathbf{W})}{\int p(\mathcal{D}_{anchor} | \mathbf{W}') p(\mathbf{W}') d\mathbf{W}'}.
\end{equation}
Since the integral in the denominator of above formula is intractable, we introduce a variational distribution $q_\theta(\mathbf{W})$ that is easy to sample from and approximate $p(\mathbf{W} | \mathcal{D}_{anchor})$ by minimizing the KL divergence with true posterior, 
\begin{equation}
\begin{split}
    \theta^* = \operatorname*{argmin}_\theta \mathcal{D}_{\text{KL}}[q_\theta(\mathbf{W}) \parallel p(\mathbf{W} | \mathcal{D}_{anchor})].
\end{split}
\end{equation}
We demonstrate that the aforementioned objective is mathematically equivalent to maximizing the Evidence Lower Bound (ELBO). This bound comprises two components: a Reconstruction Term and a Prior Matching Term, formulated as follows,
\begin{equation} 
\begin{split}
\mathcal{L}_{\text{ELBO}}(\theta) =& \underbrace{\mathbb{E}_{\mathbf{W} \sim q_\theta}[\log p(\mathcal{D}_{anchor} | \mathbf{W})]}_{Reconstruction Term} \\ &- \underbrace{\mathcal{D}_{\text{KL}}[q_\theta(\mathbf{W}) \parallel p(\mathbf{W})]}_{Prior Matching Term}.
\end{split}
\end{equation}
In the following, we detail the approximation process for the two terms, respectively. First, regarding the Reconstruction Term, which represents the expected log-likelihood, maximizing the log-likelihood is equivalent to minimizing the negative log-likelihood in practice. We approximate this term using Monte Carlo sampling, and define a weighted BCE loss for a single sample, formulated as follows,
\begin{equation} \ell_{GDV}^{(i)} = -[\omega_i \cdot y_i \log(\hat{y}_i) + (1 - y_i) \log(1 - \hat{y}_i)], \end{equation}
where $\hat{y}_i = f_{\mathbf{W}}(\mathbf{z}_{GDV}^{(i)})$ with $\mathbf{W} \sim q_\theta(\mathbf{W})$ denotes the output of a single stochastic forward pass (i.e., one Monte Carlo sample) with Dropout enabled. $\omega_i$ represents a class-balancing weight designed to address the imbalance between clean samples and noisy correspondence.
Next, regarding the Prior Matching Term, we approximate it using a standard L2 weight decay term $\lambda_{L2}$.
Consequently, we obtain the training loss for the EKI module, formulated as,
\begin{equation}
\label{eq:loss_eki}
\mathcal{L}_{\text{EKI}} = (\frac{1}{|\mathcal{D}_{anchor}|}\sum_{i\in\mathcal{D}_{anchor}}
\ell_{GDV}^{(i)})+\lambda_{L2}||\mathbf{W}||^2,\end{equation}
where $\lambda_{L2}$ is L2 weight decay coefficient. During the inference phase, we keep dropout enabled and approximate bayesian predictive distribution by performing $T$ stochastic forward passes, i.e., $T$ samplings, formulated as follows:
\begin{equation}
p(y | \mathbf{z}_{GDV}, \mathcal{D}_{anchor}) \approx \frac{1}{T} \sum_{t=1}^{T} p(y|\mathbf{z},\mathbf{W}_t),
\end{equation}
where $\mathbf{W}_t \sim q_\theta(\mathbf{W})$.
Thus, we obtain the final confidence $\hat{c}$ (i.e., the posterior probability of $y=1$), expressed as the average of the $T$ predictions, formulated as follows:
\begin{equation}
\hat{c} = \frac{1}{T} \sum_{t=1}^{T} \sigma(f_{\mathbf{W}_t}(\mathbf{z}_{GDV})).
\end{equation}

\subsection{Dual-Stream Reconciliation (DSR)}
\label{sec:DSR}
To calibrate the base representation model concurrently with model training, we utilize the confidence $\hat{c}$ output by the EKI module as a dynamic gating signal to divide the entire training samples into a clean alignment stream and a feedback reconciliation stream. Specifically, the clean alignment stream is utilized to optimize model training, while the feedback reconciliation stream is employed to calibrate the base representation model.

\statement{Clean Alignment Stream.}
The clean alignment stream aims to learn discriminative features from high-confidence samples. Inspired by RCL~\cite{RCL}, we first employ a Robust Contrastive Loss, which focuses on utilizing negative learning to mitigate false positives, thereby indirectly achieving alignment. Then we weight it with the confidence $\hat{c}$.
This mechanism effectively dynamically suppresses the gradient contribution of low-confidence noisy correspondence to achieve clean alignment, and is formulated as follows,
\begin{equation}
\mathcal{L}_{\text{Align}}\!\!=\!\!- \frac{1}{B} \sum_{i,j\neq i}^B \hat{c}_i \cdot \log (1- \frac{\exp(s(\mathbf{z}_{q,i}, \mathbf{z}_{t,j}) / \tau)}{\sum_{j}^B \exp(s(\mathbf{z}_{q,i}, \mathbf{z}_{t,j}) / \tau)}),
\label{eq:align_loss}
\end{equation}
where $\tau$ is the temperature parameter, $B$ is the batch size, and $s(\cdot, \cdot)$ denotes the cosine similarity function.

\begin{table*}[ht]
\centering
\small
\setlength{\tabcolsep}{12pt}
\caption{Performance comparison on the FashionIQ validation set in terms of R@K(\%). The best and second-best results are highlighted in \textbf{bold} and \underline{underline}, respectively.}
\vspace{-8pt}
\resizebox{0.9\linewidth}{!}{%
\begin{tabular}{c|l|cc|cc|cc|cc|ccc}
\Xhline{1.5pt}
\multirow{2}{*}{\textbf{Noise}} & \multirow{2}{*}{\textbf{Methods}} 
& \multicolumn{2}{c|}{\textbf{Dress}} 
& \multicolumn{2}{c|}{\textbf{Shirt}} 
& \multicolumn{2}{c|}{\textbf{Toptee}} 
& \multicolumn{3}{c}{\textbf{Average}} \\
\cline{3-11}
& & R@10 & R@50 & R@10 & R@50 & R@10 & R@50 & R@10 & R@50 & AVG. \\
\hline
\hline
\multirow{6}{*}{0\%}
& SPRC~\cite{sprc}~(ICLR'24) & 49.18 & \underline{72.43} & 55.64 & 73.89 & {59.35} & 78.58 & 54.92 & 74.97 & 64.85 \\
  & QuRe~\cite{QuRe}~(ICML'25) & 46.80 & 69.81 & 53.53 & 72.87 & 57.47 & 77.77 & 52.60 & 73.48 & 63.04 \\
& TME~\cite{TME}~(CVPR'25) & {49.73} & 71.69 & {56.43} & {74.44} & 59.31 & \underline{78.94} &{55.15} & {75.02} & {65.09} \\
& {HABIT~\cite{habit}~(AAAI'26)} & {49.99} & 72.38 & \underline{56.62} & {74.68} & \textbf{59.51} & 78.53 & \underline{55.38} & \underline{75.20} & \underline{65.29}\\

& {INTENT~\cite{intent}~(AAAI'26)} & \underline{50.32} & {72.10} & {56.32} & \underline{74.93} & {59.28} & {78.45} & {55.31} & {75.16} & {65.24} \\

 & \cellcolor[HTML]{E0E9F7}\textbf{Air-Know (Ours)} & \cellcolor[HTML]{E0E9F7}\textbf{51.03} & \cellcolor[HTML]{E0E9F7}\textbf{73.08} & \cellcolor[HTML]{E0E9F7}\textbf{56.76} & \cellcolor[HTML]{E0E9F7}\textbf{75.05} & \cellcolor[HTML]{E0E9F7}\underline{59.47} & \cellcolor[HTML]{E0E9F7}\textbf{79.02} & \cellcolor[HTML]{E0E9F7}\textbf{55.75} & \cellcolor[HTML]{E0E9F7}\textbf{75.72} & \cellcolor[HTML]{E0E9F7}\textbf{65.73}\\
\hline
\multirow{5}{*}{20\%}
& SPRC~\cite{sprc}~(ICLR'24) & 39.81 & 62.22 & 48.58 & 66.29 & 50.48 & 70.58 & 46.29 & 66.36 & 56.33 \\
& TME~\cite{TME}~(CVPR'25)& {49.03} & 70.35 & \underline{55.84} & {73.16} & {57.22} & {78.23} & {54.03} & {73.91} & {63.97} \\
& HABIT~\cite{habit}~(AAAI'26) & \underline{49.63}& {71.34} & {55.67}& {73.19 }& \underline{58.14} & {78.32} & \underline{54.48} & {74.28} & \underline{64.38} \\

& {INTENT~\cite{intent}~(AAAI'26)} & {49.32}& \underline{71.43} & {55.32}& \underline{73.57}& {58.01} & \underline{78.46} & {54.22} & \underline{74.49} & {64.36} \\
& \cellcolor[HTML]{E0E9F7}\textbf{Air-Know (Ours)} & \cellcolor[HTML]{E0E9F7}\textbf{50.22}& \cellcolor[HTML]{E0E9F7}\textbf{73.48} & \cellcolor[HTML]{E0E9F7}\textbf{56.38}& \cellcolor[HTML]{E0E9F7}\textbf{74.73}& \cellcolor[HTML]{E0E9F7}\textbf{58.95} & \cellcolor[HTML]{E0E9F7}\textbf{78.94} & \cellcolor[HTML]{E0E9F7}\textbf{55.18} & \cellcolor[HTML]{E0E9F7}\textbf{75.71} &\cellcolor[HTML]{E0E9F7}\textbf{65.45} \\
\hline
\multirow{5}{*}{50\%}
& SPRC~\cite{sprc}~(ICLR'24) & 35.94 & 57.16 & 42.25 & 61.63 & 44.98 & 64.76 & 41.06 & 61.19 & 51.12 \\
& TME~\cite{TME}~(CVPR'25)& {46.26} & {68.27} & {53.09} & {71.88} & {55.07} & {76.59} & {51.47} & {72.25} & {61.86} \\ 
& HABIT~\cite{habit}~(AAAI'26) & {47.33} & {69.71} & \underline{53.72} & \underline{72.55} & {56.51} & \underline{77.00} & \underline{52.52} & {73.09} & {62.80} \\

& INTENT~\cite{intent}~(AAAI'26) & \textbf{47.99} & \textbf{71.24} & {52.78} & {72.48} & \underline{56.79} & {76.23} & \underline{52.52} & \underline{73.32} & \underline{62.92} \\
& \cellcolor[HTML]{E0E9F7}\textbf{Air-Know (Ours)} & \cellcolor[HTML]{E0E9F7}\underline{47.90} & \cellcolor[HTML]{E0E9F7}\underline{71.09} & \cellcolor[HTML]{E0E9F7}\textbf{53.95} & \cellcolor[HTML]{E0E9F7}\textbf{73.09} & \cellcolor[HTML]{E0E9F7}\textbf{57.71} & \cellcolor[HTML]{E0E9F7}\textbf{77.94} & \cellcolor[HTML]{E0E9F7}\textbf{53.19} & \cellcolor[HTML]{E0E9F7}\textbf{74.04} & \cellcolor[HTML]{E0E9F7}\textbf{63.61} \\
\hline
\multirow{5}{*}{80\%}
& SPRC~\cite{sprc}~(ICLR'24) & 28.41 & 50.77 & 36.21 & 54.37 & 35.90 & 59.06 & 33.51 & 54.03 & 43.77 \\
& TME~\cite{TME}~(CVPR'25)& {41.45} & {64.35} & {47.30} & {68.20} & {51.25} & {73.23} & {46.67} & {68.60} & {57.63} \\
& HABIT~\cite{habit}~(AAAI'26) & {42.04} & {65.20} & {50.12} & \underline{69.77} & {52.92} & {73.61} & {48.36} & {69.53} & {58.94} \\

& {INTENT~\cite{intent}~(AAAI'26)} & \underline{42.07} & \underline{65.58} & \underline{50.38} & {69.41} & \underline{53.09} & \underline{73.91} & \underline{48.51} & \underline{69.63} & \underline{59.07} \\
   
& \cellcolor[HTML]{E0E9F7}\textbf{Air-Know (Ours)} & \cellcolor[HTML]{E0E9F7}\textbf{42.52} & \cellcolor[HTML]{E0E9F7}\textbf{66.03} & \cellcolor[HTML]{E0E9F7}\textbf{51.31} & \cellcolor[HTML]{E0E9F7}\textbf{70.26} & \cellcolor[HTML]{E0E9F7}\textbf{53.86} & \cellcolor[HTML]{E0E9F7}\textbf{74.41} & \cellcolor[HTML]{E0E9F7}\textbf{49.23} & \cellcolor[HTML]{E0E9F7}\textbf{70.23} & \cellcolor[HTML]{E0E9F7}\textbf{59.73} \\
\Xhline{1.5pt}
\end{tabular}
}
\vspace{-16pt}
\label{tab:fiq_noise}
\end{table*}

\statement{Isolation-Reconciliation Stream.}
The feedback reconciliation stream leverages hard noisy samples, identified as noisy correspondence ($\hat{c}_i \to 0$) but exhibiting high query-target similarity, to optimize the base representation model, thereby compelling it to reduce the representation similarity of these noisy samples.
Let $C = \sum_{i=1}^B (1-\hat{c}_i)$, then, 
\begin{equation}
\small
\mathcal{L}_{\text{Recon}} = \frac{1}{C} {\sum_{i=1}^B (1 - \hat{c}_i)\cdot\max \left \{(s(\mathbf{z}_{q,i}, \mathbf{z}_{t,i}) - \alpha) / \tau , 0\right \}},
\label{eq:recon}
\end{equation}
where $\alpha$ denotes the tolerance margin. Finally, we obtain the final loss function of Air-Know as,
\begin{equation}
    \fontsize{9pt}{9pt}
    \mathbf{\Theta^{*}}=
    \underset{\mathbf{\Theta}}{\arg \min } \left( {\mathcal{L}}_{Align}+\lambda {\mathcal{L}}_{Recon}\right),
    \label{optimization}
\end{equation}
where $\mathbf{\Theta^{*}}$ denotes the learnable parameters of Air-Know, and $\lambda$ represents the trade-off hyperparameter. Furthermore, $\mathcal{L}_{EKI}$ in Eqn~\ref{eq:loss_eki} is utilized exclusively during the warm-up stage.

\section{Experiments}
In this section, we present a comprehensive experimental evaluation and analysis of Air-Know. Following the settings of the previous work~\cite{TME,habit,intent}, all ablation studies and parameter sensitivity analyses are conducted under a noise ratio of $0.2$.
\subsection{Experimental Settings}
\statement{Datasets.}
To comprehensively evaluate the performance of our proposed Air-Know, we utilize two datasets widely recognized in CIR, FashionIQ~\cite{FashionIQ} and CIRR~\cite{cirr}. These datasets represent typical scenarios within the fashion-domain and open-domain, respectively.

\statement{Implementation Details.}
\label{subsubsec:Implementation_details}
We adopt BLIP-2~\cite{blip-2} as Air-Know's backbone and train it on a single $32$GB NVIDIA v100 GPU for $10$ epochs, using the AdamW optimizer.
We set the number of Q-former's learned queries to $32$, and the embedding dimension D is set to $256$. The threshold $\alpha$ in Eqn~\ref{eq:recon} is set to $0.7$, while the loss weight $\lambda$ in Eqn~\ref{optimization} is set to $0.5$ for CIRR, and $0.6$ for FashionIQ. We use $0.1$ for MC dropout ratio.
The temperature coefficient $\tau$ in Eqn~\ref{eq:align_loss} is set to $0.07$. We introduce noise ratio $\sigma$ = {$0.2$, $0.5$, $0.8$} during training to simulate the NTC environment.
To ensure the DSR module receives a reliable dynamic gating signal, we adopt a Two-Stage Progressive Training Strategy. \textbf{Stage 1}: We optimize only the EKI module (learning rate $5\times10^{-4}$) using the high-precision, small-scale anchor dataset $D_{anchor}$ built by the EPA module. \textbf{Stage 2}: We freeze EKI's parameters and focus on training in DSR module. The learning rate is set to $2 \times 10^{-5}$ on CIRR and $1 \times 10^{-5}$ on FashionIQ.

\statement{Evaluation.}
For evaluation, we adopt the widely accepted Recall@K (R@K) as the primary performance metric. Following previous work's settings~\cite{encoder,HUD,OFFSET}, for CIRR, we evaluate R@{1, 5, 10, 50} and R$_{sub}$@{1, 2, 3} on its subset; for FashionIQ, we report the performance of R@{10, 50} across its three categories: Dress, Shirt, and Top\&Tee.

\begin{table*}[t]
\centering
\small
\setlength{\tabcolsep}{12pt}
\caption{Performance comparison on the CIRR test set in terms of R@K(\%) and R$_{sub}$@K(\%). The best and second-best results are highlighted in \textbf{bold} and \underline{underline}, respectively.} 
\vspace{-8pt}
\resizebox{0.9\linewidth}{!}{%
\begin{tabular}{c|l|cccc|ccc|c}
\Xhline{1.5pt}
\multirow{2}{*}{\textbf{Noise}} & \multirow{2}{*}{\textbf{Methods}} 
& \multicolumn{4}{c|}{\textbf{R@K}} 
& \multicolumn{3}{c|}{\textbf{R$_{sub}$@K}} 
& \multirow{2}{*}{\textbf{Avg(R@5, R$_{sub}$@1)}} \\
\cline{3-9}
& & K=1 & K=5 & K=10 & K=50 & K=1 & K=2 & K=3 & \\
\hline
\hline
\multirow{6}{*}{0\%}
& SPRC~\cite{sprc}~(ICLR'24) & 51.96 & 82.12 & 89.74 & 97.69 & {80.65} & {92.31} & 96.60 & {81.39} \\
& QuRe~\cite{QuRe}~(ICML'25) & {52.22} & {82.53} & {90.31} & 98.17 & 78.51 & 91.28 & 96.48 & 80.52 \\    
& TME~\cite{TME}~(CVPR'25) & \textbf{53.42} & \underline{82.99} & 90.24 & {98.15} & \textbf{81.04} & \underline{92.58} & \underline{96.94} & \textbf{82.01} \\
& HABIT~\cite{habit} (AAAI'26) & {52.71} & {82.64} & \underline{90.63} & {98.19} & \underline{80.99} & \textbf{92.77} & \textbf{97.00} & \underline{81.82} \\
& {INTENT~\cite{intent} (AAAI'26)} & \underline{53.37} & \textbf{83.16} & \textbf{90.73} & \underline{98.22} & {80.24} & {92.37} & {96.89} & {81.70} \\
\cline{2-10}
 & \cellcolor[HTML]{E0E9F7}\textbf{Air-Know (Ours)} & \cellcolor[HTML]{E0E9F7}{51.97} & \cellcolor[HTML]{E0E9F7}{82.25} & \cellcolor[HTML]{E0E9F7}{90.42} & \cellcolor[HTML]{E0E9F7}\textbf{98.25} & \cellcolor[HTML]{E0E9F7}{79.21} & \cellcolor[HTML]{E0E9F7}{91.94} & \cellcolor[HTML]{E0E9F7}{96.87} & \cellcolor[HTML]{E0E9F7}{80.73} \\
\hline
\multirow{5}{*}{20\%}
& SPRC~\cite{sprc}~(ICLR'24) & 45.90 & 75.86 & 83.52 & 93.37 & 78.10 & 91.40 & 96.05 & 76.98 \\
& TME~\cite{TME}~(CVPR'25) & 51.35 & 81.01 & 88.53 & 97.81 & \underline{78.46} & 91.25 & 96.39 & \underline{79.74} \\
& HABIT~\cite{habit}~(AAAI'26) & \textbf{51.68} & {81.02} & {89.24} & {97.81} & {78.20} & \underline{91.66} & \textbf{96.75} & {79.61} \\
& {INTENT~\cite{intent}~(AAAI'26)} & {51.25} & \underline{81.36} & \textbf{90.02} & \underline{98.05} & {77.95} & {91.40} & 96.46 & {79.66} \\
& \cellcolor[HTML]{E0E9F7}\textbf{Air-Know (Ours)} & \cellcolor[HTML]{E0E9F7}\underline{51.37} & \cellcolor[HTML]{E0E9F7}\textbf{81.73} & \cellcolor[HTML]{E0E9F7}\underline{89.78} & \cellcolor[HTML]{E0E9F7}\textbf{98.07} & \cellcolor[HTML]{E0E9F7}\textbf{79.06} & \cellcolor[HTML]{E0E9F7}\textbf{91.66} & \cellcolor[HTML]{E0E9F7}\underline{96.54} & \cellcolor[HTML]{E0E9F7}\textbf{80.40} \\
\hline
\multirow{5}{*}{50\%}
& SPRC~\cite{sprc}~(ICLR'24) & 39.93 & 66.00 & 73.59 & 86.48 & 75.81 & 89.21 & 95.37 & 70.90 \\
& TME~\cite{TME}~(CVPR'25) & 48.48 & 78.94 & 87.28 & {96.99} & {76.48} & {90.07} & {95.83} & {77.71} \\
& HABIT~\cite{habit}~(AAAI'26) & \underline{50.32} & {79.63} & {88.34} & {97.06} & {76.84} & \underline{90.60} & \textbf{96.27} & \underline{78.87} \\

& INTENT~\cite{intent}~(AAAI'26) & {49.78} & \underline{79.64} & \textbf{88.99} & \underline{97.37} & \underline{77.18} & {90.41} & {96.00} & {78.41} \\

& \cellcolor[HTML]{E0E9F7}\textbf{Air-Know (Ours)} & \cellcolor[HTML]{E0E9F7}\textbf{50.82} & \cellcolor[HTML]{E0E9F7}\textbf{80.42} & \cellcolor[HTML]{E0E9F7}\underline{88.37} & \cellcolor[HTML]{E0E9F7}\textbf{97.83} & \cellcolor[HTML]{E0E9F7}\textbf{77.43} & \cellcolor[HTML]{E0E9F7}\textbf{90.68} & \cellcolor[HTML]{E0E9F7}\underline{96.24} & \cellcolor[HTML]{E0E9F7}\textbf{78.93} \\
\hline
\multirow{5}{*}{80\%}
& SPRC~\cite{sprc}~(ICLR'24) & 29.95 & 51.25 & 58.51 & 73.86 & 70.22 & 86.05 & 93.21 & 60.74 \\
& TME~\cite{TME}~(CVPR'25) & {46.31} & {75.78} & {84.89} & {95.83} & {73.37} & {88.02} & {94.89} & {74.58} \\
& HABIT~\cite{habit}~(AAAI'26) & \underline{47.93} & {76.84} & {85.95} & {95.90} & \underline{74.87} & {89.08} & {95.21} & {75.86} \\

& {INTENT~\cite{intent}~(AAAI'26)} & {47.90} & \underline{78.13} & \underline{87.04} & \underline{96.47} & {73.81} & \underline{89.18} & \textbf{95.54} & \underline{75.97} \\
& \cellcolor[HTML]{E0E9F7}\textbf{Air-Know (Ours)} & \cellcolor[HTML]{E0E9F7}\textbf{48.84} & \cellcolor[HTML]{E0E9F7}\textbf{78.50} & \cellcolor[HTML]{E0E9F7}\textbf{87.10} & \cellcolor[HTML]{E0E9F7}\textbf{96.47} & \cellcolor[HTML]{E0E9F7}\textbf{75.30} & \cellcolor[HTML]{E0E9F7}\textbf{89.28} & \cellcolor[HTML]{E0E9F7}\underline{95.42} & \cellcolor[HTML]{E0E9F7}\textbf{76.90} \\
\Xhline{1.5pt}
\end{tabular}
}
\vspace{-10pt}
\label{tab:cirr-noise}
\end{table*}

\subsection{Performance Comparison}

To evaluate the robustness and generalization of Air-Know in NTC scenarios, we compare it with both ordinary methods~(SPRC~\cite{sprc}) and robust methods~(TME~\cite{TME}, HABIT~\cite{habit}, INTENT~\cite{intent}) on the FashionIQ and CIRR datasets under different noise ratios of $20$\%, $50$\%, and $80$\%. The results are presented in Table~\ref{tab:fiq_noise} and Table~\ref{tab:cirr-noise}. Based on our analysis, we draw the following observations:
\textbf{1) Ordinary methods, such as SPRC, exhibit a marked decline in performance when noise ratio increases.} This highlighting their high sensitivity to noisy correspondence. In contrast, robust methods, including Air-Know, maintain stronger robustness despite performance degradation, confirming the necessity of robust design in NTC scenarios.
\textbf{2) Air-Know attains superior robustness compared to other robust methods.}
First, Air-Know achieves superior performance across all noise settings.
It is worth noting that as the noise ratio increases, the performance advantage of Air-Know over baselines becomes increasingly pronounced, thereby verifying its superior robustness in NTC scenarios. We attribute this success to the unique three-phase architecture of Air-Know.
Unlike previous works relying on unreliable training dynamics, our EKI module provides credible confidence. And DSR effectively mitigates the issue of representation pollution. It is worth noting that Air-Know does not perform well in non-noise settings; this may be due to its specialized design for NTC scenarios, which prioritizes robustness at the expense of performance.

\subsection{Ablation Study}
We conducted comprehensive ablation experiments (Table~\ref{tab:ablation}) to evaluate the core components of Air-Know: EPA, EKI, and DSR.
Specifically, we configured the EPA, EKI, and DSR modules as follows. \textbf{(a) EPA Module}: D\#1 (w/o Cross-Verification) removes multi-step verification, using only a single global prompt to obtain labels; D\#2 (w/o EPA) completely removes EPA, forcing EKI to learn blindly.  
\textbf{(b) EKI Module}: 
D\#3 (w/ GDV\_1) substitutes the input GDV with the original triplet representation $(\mathbf{z}_r, \mathbf{z}_m, \mathbf{z}_t)$, where $\mathbf{z}_r, \mathbf{z}_m$ are also pooled reference and modification features extracted by Q-former; 
D\#4 (w/ GDV\_2) retains only the basic semantics $(\mathbf{z}_q, \mathbf{z}_t)$ in GDV; 
D\#5 to D\#7 remove the relation vector $(\mathbf{z}_q \odot \mathbf{z}_t)$, the difference vector $(\mathbf{z}_q - \mathbf{z}_t)$, and the basic semantics $(\mathbf{z}_q, \mathbf{z}_t)$, from the GDV, respectively; 
D\#8 (w/o Dropout) disables MC Dropout; 
D\#9 and D\#10 represent combined ablations.
\textbf{(c) DSR Module}: This evaluates the dynamic gating. 
D\#11 (w/o Align) removes the clean alignment stream, where all samples are processed by the feedback reconciliation stream; 
D\#12 (w/o Recon) removes the feedback reconciliation stream, where all samples are processed by the clean alignment stream; D\#13 (w/o DSR) removes the dual-stream structure, employing the standard InfoNCE loss.
Table~\ref{tab:ablation} presents the ablation results, and the analysis is as follows.

  \vspace{-8pt}
\begin{table}[ht]
  \centering
  \caption{Ablation study on FashionIQ and CIRR datasets. Best and sub-optimal results are highlighted in \textbf{bold} and \underline{underline}.}
  \vspace{-8pt}
  \resizebox{\linewidth}{!}{%
    \begin{tabular}{cc|cc|cc}
    \Xhline{1.5pt}
    \multicolumn{1}{c|}{\multirow{2}{*}{D\#}} & \multirow{2}{*}{Deriv.} & \multicolumn{2}{c|}{FashionIQ-Avg} & \multicolumn{2}{c}{CIRR-Avg} \\
\cline{3-6}    \multicolumn{1}{c|}{} &       & R@10  & R@50  & R@K   & Rsub@K \\
    \hline
    \hline
    \rowcolor{gray!15}
    \multicolumn{6}{c}{\textit{\textbf{(a) External Prior Arbitration (EPA)}}} \\
    \multicolumn{1}{c|}{1} & \multicolumn{1}{l|}{w/o Cross-Verification} & 53.37  & 73.51  & 79.95  & 88.04  \\
    \multicolumn{1}{c|}{2} & \multicolumn{1}{l|}{w/o EPA} & 53.99  & 73.27  & 79.41  & 88.14  \\
    \rowcolor{gray!15}
    \multicolumn{6}{c}{\textit{\textbf{(b) Expert-Knowledge Internalization (EKI)}}} \\
    \multicolumn{1}{c|}{3} & \multicolumn{1}{l|}{w/ GDV\_1} & 53.54  & 73.29  & 79.83  & 88.20  \\
    \multicolumn{1}{c|}{4} & \multicolumn{1}{l|}{w/ GDV\_2} & 54.29  & 74.37  & 79.74  & 88.15  \\
    \multicolumn{1}{c|}{5} & \multicolumn{1}{l|}{w/ GDV\_3} & 54.67  & 74.73  & 80.13  & 88.29  \\
    \multicolumn{1}{c|}{6} & \multicolumn{1}{l|}{w/ GDV\_4} & 54.47  & 74.49  & 79.62  & 88.86  \\
    \multicolumn{1}{c|}{7} & \multicolumn{1}{l|}{w/ GDV\_5} & 52.80  & 72.48  & 79.66  & 88.06  \\
    \multicolumn{1}{c|}{8} & \multicolumn{1}{l|}{w/o Dropout} & 53.71  & 73.75  & 79.48  & 88.35  \\
    \multicolumn{1}{c|}{9} & \multicolumn{1}{l|}{w/ GDV\_3 \& w/o Dropout} & 54.82  & 74.85  & 79.77  & 88.78  \\
    \multicolumn{1}{c|}{10} & \multicolumn{1}{l|}{w/ GDV\_4 \& w/o Dropout} & 54.55  & 74.86  & 79.48  & 88.45  \\
    \rowcolor{gray!15}
    \multicolumn{6}{c}{\textit{\textbf{(c) Dual-Stream Reconciliation (DSR)}}} \\
    \multicolumn{1}{c|}{11} & \multicolumn{1}{l|}{w/o Align} & 16.84  & 32.45  & 0.83  & 41.24  \\
    \multicolumn{1}{c|}{12} & \multicolumn{1}{l|}{w/o Recon} & 54.08  & 74.76  & 79.30  & 88.58  \\
    \multicolumn{1}{c|}{13} & \multicolumn{1}{l|}{w/o DSR} & 49.53  & 70.36  & 77.95  & 87.14  \\
    \hline
    \rowcolor[HTML]{E0E9F7}
    \multicolumn{2}{l|}{\textbf{Air-Know (Ours)}} & \textbf{55.18} & \textbf{75.71} & \textbf{80.24} & \textbf{89.09} \\
    \Xhline{1.5pt}
    \end{tabular}%
    }
    \vspace{-12pt}
  \label{tab:ablation}%
\end{table}%

\noindent\textbf{Analysis of EPA Module}. 
Removing EPA (D\#2) or multi-step verification (D\#1) significantly drops performance. This confirms the necessity of external arbitration and multi-step verification for complex NTC samples.

\noindent\textbf{Analysis of EKI Module}. 
1) On two datasets, the performance of D\#3, D\#4, and D\#7 all decline, indicating that EKI's effectiveness relies on combining basic semantics and geometric discrepancy features. And the performance decreases in D\#5 and D\#6 verify the effectiveness of local and global matching proxy signals, respectively. 
2) The significant performance degradation of D\#8 confirms that MC Dropout is vital for probabilistic boundary modeling, not just regularization. 
3) The results of D\#9 and D\#10 further reveal that only by combining geometric features with uncertainty estimation can EKI precisely simulate the discriminative logic of the expert.

\noindent\textbf{Analysis of DSR Module}. 
1) D\#13's performance drop illustrates the necessity of the decoupled architecture.
2) The performance of D\#11 deteriorates to $16.84$ on FashionIQ-Avg R@$10$ and $0.83$ on CIRR R@K, respectively, indicating that removing the clean alignment stream causes the model to lose the primary feature learning path.
3) D\#12's performance shows that the feedback reconciliation stream is also critical, as its removal impairs robustness by discarding valid information and preventing feedback correction, despite avoiding some representation pollution.

         \vspace{-5pt}
\begin{figure}[ht]
  \centering
  % \fbox{\rule{0pt}{2in} \rule{0.9\linewidth}{0pt}}
        \vspace{-5pt}
   \resizebox{\linewidth}{!}{\includegraphics[width=\linewidth]{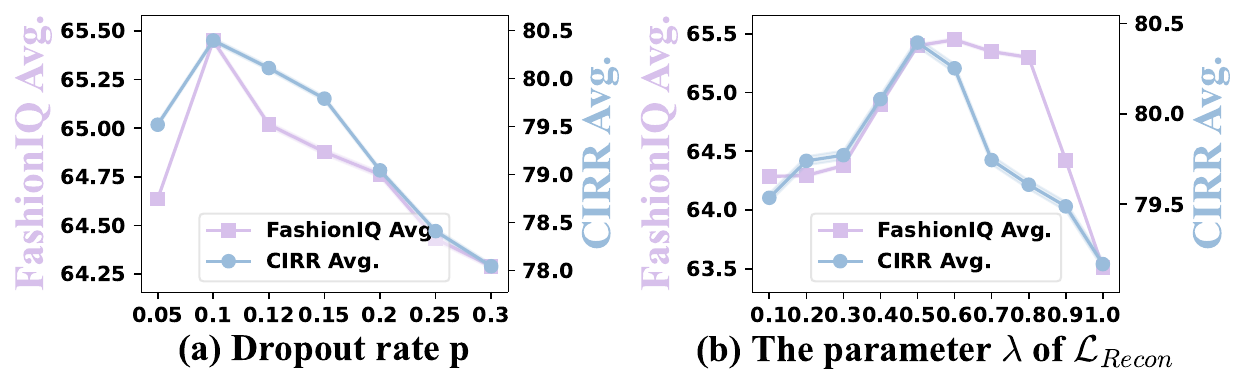}}
    \vspace{-18pt}
    \caption{Sensitivity to the hyperparameters (a) $p$ and (b) $\lambda$.}
    \vspace{-18pt}
    \label{fig:sensitivity}
\end{figure}

\subsection{Sensitivity Analysis}

To evaluate the sensitivity of Air-Know to key hyperparameters, we specifically analyzed two core parameters, the MC Dropout rate $p$ and the feedback reconciliation stream loss weight $\lambda$, on the FashionIQ and CIRR datasets.
As shown in Figure~\ref{fig:sensitivity}(a), we adjusted $p$ within the interval $[0.05, 0.3]$. 
The performance exhibits an initial increase followed by a decrease as $p$ increases, reaching a peak at $p=0.1$ in both cases. An excessively low $p$ degenerates the model to deterministic point estimation, while an excessively high $p$ injects destabilizing noise that undermines the stability of posterior sampling. 
Both extremes prevent EKI from accurately identifying the geometric truncation cone, subsequently reducing confidence
$\hat{c}$ accuracy and disrupting DSR's gradient allocation.
Figure~\ref{fig:sensitivity}(b) analyzes the impact of $\lambda \in [0.1, 1.0]$. The performance showed a similar increase-then-decrease trend, peaking at $\lambda$ = 0.6 (FashionIQ) and $\lambda$ = 0.5 (CIRR). This parameter balances the alignment and reconciliation tasks. A low $\lambda$ provides insufficient constraint from the feedback reconciliation stream, failing to neutralize noisy correspondence. Conversely, high $\lambda$ causes over-focus on reconciliation task, which subsequently impairs clean alignment stream's performance.

\subsection{Case Study}

To intuitively evaluate the effectiveness of Air-Know, Figure~\ref{fig:case} visualizes the Top-5 retrieval comparison between Air-Know and the sub-optimal baseline TME on the CIRR and FashionIQ datasets. As shown in the figure, Air-Know successfully retrieves the target to the Top-1 position in all cases. This advantage is primarily attributed to the training phase, where the EKI module, guided by the anchor dataset provided by EPA, maps samples into a geometrically separable space to construct an optimal geometric truncation cone. The resulting reliable confidence serves as a dynamic gating signal, which drives the DSR module to divert noisy correspondence to the feedback reconciliation stream for feedback optimization, thereby safeguarding the purity of the clean alignment stream. Consequently, Air-Know more accurately captures core intentions involving semantic replacement (e.g., changing a church to a bar) and attribute modification (e.g., changing a long dress to a short skirt). In contrast, TME fails to retrieve the correct target, likely because it lacks an effective decoupling mechanism for NTC discrimination, making it difficult to extract precise cross-modal correspondence. 
\begin{figure}[htbp]
    \centering
    \vspace{-8pt}
    \resizebox{\linewidth}{!}{\includegraphics[width=0.95\linewidth]{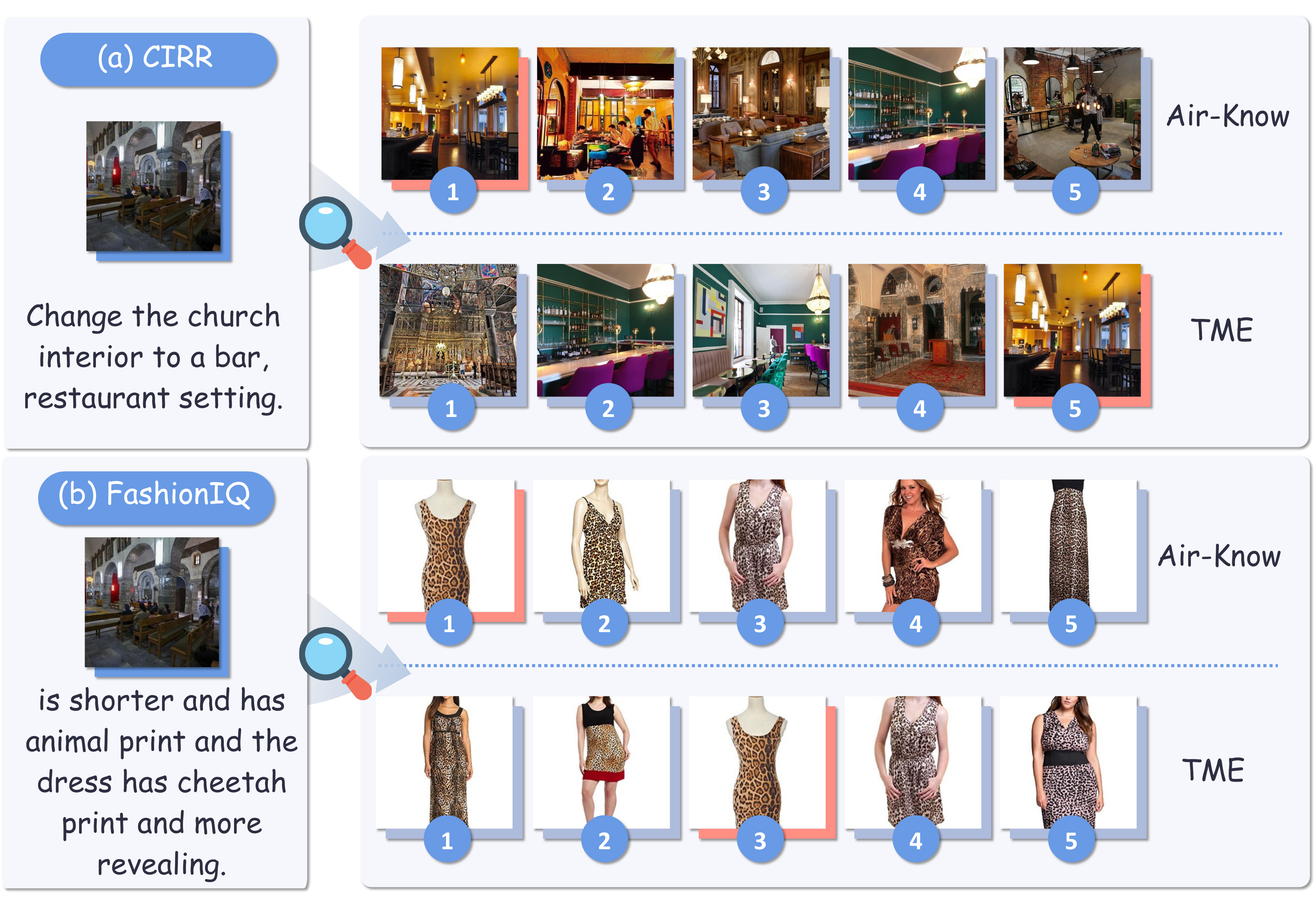}}
        \vspace{-16pt}
    \caption{Case Study on (a) CIRR and (b) FashionIQ.}
        \vspace{-18pt}
    \label{fig:case}
\end{figure}

\section{Conclusion}
In this paper, we investigate the problem of NTC in the CIR task. We reveal that the small loss hypothesis relied upon by existing robust methods faces fundamental limitations under the semantic ambiguity noise unique to NTC. This leads the model into a vicious cycle of self-dependency and triggers representation pollution. To address this problem, we propose a novel three-stage learning framework, Air-Know, which aims to leverage the expert judgment of MLLMs while avoiding their high online overhead. Simultaneously, it calibrates the base representation model online through a stream separation mechanism to prevent representation pollution. Extensive experiments on multiple CIR benchmark datasets demonstrate the superior performance, which outperforms all SOTA methods.

\clearpage
\appendix
\setcounter{page}{1}
\maketitlesupplementary
\setcounter{equation}{0}

\noindent This is the appendix of ``Air-Know: Arbiter-Calibrated Knowledge-Internalizing Robust Network for Composed Image Retrieval''. 
\begin{itemize}
    \item \textbf{Appendix~\ref{sup:proof}}: Proof
    \item \textbf{Appendix~\ref{sup:datasets}}: Datasets
    \item \textbf{Appendix~\ref{sup:cross-vali}}: Cross-Validation of EPA 
    \item \textbf{Appendix~\ref{sup:training_details}}: Training Details
    \begin{itemize}
        \item \textbf{Appendix~\ref{sup:d1}}: Architecture of the Lightweight Proxy
        \item \textbf{Appendix~\ref{sup:d2}}: Two-Stage Progressive Training Strategy

    \end{itemize}
    \item \textbf{Appendix~\ref{sup:additional_quantitative_analyses}}: Additional Quantitative Analyses
        \begin{itemize}
            \item \textbf{Appendix~\ref{sup:quantitative_efficiency}}: Efficiency Evaluation
            \item \textbf{Appendix~\ref{sup:quantitative_param}}: Additional Hyperparameter Analysis

        \end{itemize}
    \item \textbf{Appendix~\ref{sup:additional_ablation}}: Additional Ablation Study
        \begin{itemize}
            \item \textbf{Appendix~\ref{sup:additional_ablation_MLLM}}: Ablation Study of MLLMs 
            \item \textbf{Appendix~\ref{sup:additional_ablation_IMA}}: Ablation Study of EPA
        \end{itemize}
    % \item \textbf{Appendix~\ref{sup:algorithm_training}}: Algorithm of Training Procedure
    \item \textbf{Appendix~\ref{sup:more_qualitative_results}}: More Qualitative Results	
    \item \textbf{Appendix~\ref{sup:prompt}}: Prompt	
\end{itemize}

\section{Proof}
\label{sup:proof}
In the main text, we assert that optimizing the EKI module via the Evidence Lower Bound (ELBO) is mathematically equivalent to minimizing the KL divergence between the variational distribution $q_\theta(\mathbf{W})$ and the intractable true posterior $p(\mathbf{W} | \mathcal{D}_{anchor})$. While this is a standard result in variational inference, we provide the complete, step-by-step derivation here to ensure the self-containment of our theoretical framework.

\begin{theorem}[ELBO-KL Equivalence]
The log-evidence (marginal likelihood) $\log p(\mathcal{D}_{anchor})$ can be rigorously decomposed into the sum of the ELBO and the KL divergence between the variational approximation and the true posterior. Consequently, since the evidence is constant with respect to the model parameters $\theta$, maximizing the ELBO is strictly equivalent to minimizing the posterior KL divergence.
\end{theorem}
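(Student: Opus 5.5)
The approach is to expand the posterior KL divergence directly using Bayes' rule, as stated in the EKI section. The log-evidence $\log p(\mathcal{D}_{anchor})$ then appears as a term that is constant in $\theta$. The proof runs in three steps.

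First, we write out $\mathcal{D}_{\text{KL}}[q_\theta(\mathbf{W}) \parallel p(\mathbf{W} | \mathcal{D}_{anchor})] = \mathbb{E}_{\mathbf{W}\sim q_\theta}[\log q_\theta(\mathbf{W}) - \log p(\mathbf{W}|\mathcal{D}_{anchor})]$. We substitute $\log p(\mathbf{W}|\mathcal{D}_{anchor}) = \log p(\mathcal{D}_{anchor}|\mathbf{W}) + \log p(\mathbf{W}) - \log p(\mathcal{D}_{anchor})$. Since $\log p(\mathcal{D}_{anchor})$ does not depend on $\mathbf{W}$ and $q_\theta$ integrates to one, its expectation is just $\log p(\mathcal{D}_{anchor})$.

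Second, we regroup the remaining terms. The expression $\mathbb{E}_{q_\theta}[\log q_\theta(\mathbf{W}) - \log p(\mathbf{W})]$ is exactly $\mathcal{D}_{\text{KL}}[q_\theta(\mathbf{W})\parallel p(\mathbf{W})]$, and the remaining piece is the expected log-likelihood. Together these give the identity
\begin{equation*}
\log p(\mathcal{D}_{anchor}) = \mathcal{L}_{\text{ELBO}}(\theta) + \mathcal{D}_{\text{KL}}[q_\theta(\mathbf{W}) \parallel p(\mathbf{W} | \mathcal{D}_{anchor})],
\end{equation*}
with $\mathcal{L}_{\text{ELBO}}$ exactly as defined in the main text: the reconstruction term minus the prior-matching term.

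Third, we draw the conclusions. The left-hand side is independent of $\theta$, so $\mathcal{L}_{\text{ELBO}}(\theta) = \text{const} - \mathcal{D}_{\text{KL}}[q_\theta\parallel p(\cdot|\mathcal{D}_{anchor})]$. The two objectives therefore differ by a constant and a sign, which means they share the same argmax/argmin set, and this yields the claimed equivalence with $\theta^*$. As a by-product, nonnegativity of KL (Gibbs' inequality, via Jensen applied to $-\log$) gives $\mathcal{L}_{\text{ELBO}}\le \log p(\mathcal{D}_{anchor})$, which justifies the name "lower bound".

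The only step needing care is well-definedness rather than algebra. The manipulation requires $p(\mathcal{D}_{anchor})\in(0,\infty)$ and $q_\theta \ll p(\cdot|\mathcal{D}_{anchor})$, so that all expectations are finite and splitting the expectation of a sum into a sum of expectations is legitimate. I would state these as standing assumptions: a proper prior, a positive likelihood, and a $q_\theta$ absolutely continuous with respect to the prior.

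One subtlety specific to this paper is that $q_\theta$ is realised by MC dropout, which is a degenerate (spike-type) distribution. Its KL to a Gaussian prior is handled only approximately by the L2 term. The theorem concerns the exact identity, however, so I would note that the approximation enters only afterwards, in Eq.~\eqref{eq:loss_eki}, and is not part of the claim.
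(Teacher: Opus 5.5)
Your proposal is correct and follows essentially the same route as the paper's proof. Both expand $\mathcal{D}_{\text{KL}}[q_\theta(\mathbf{W})\parallel p(\mathbf{W}\mid\mathcal{D}_{anchor})]$, substitute Bayes' rule, pull out $\log p(\mathcal{D}_{anchor})$ as a $\mathbf{W}$-independent constant, regroup the prior terms into $\mathcal{D}_{\text{KL}}[q_\theta(\mathbf{W})\parallel p(\mathbf{W})]$, and conclude via Gibbs' inequality and the $\theta$-independence of the evidence. Your well-definedness assumptions and the MC-dropout remark are sensible additions that the paper leaves implicit, with one caveat: absolute continuity alone does not make the expectations finite, so splitting the expectation also requires $\log q_\theta$, $\log p(\mathbf{W})$ and $\log p(\mathcal{D}_{anchor}\mid\mathbf{W})$ to be $q_\theta$-integrable.
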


\begin{proof}
Our starting point is the definition of the Kullback-Leibler (KL) divergence from the variational distribution $q_\theta(\mathbf{W})$ to the true posterior $p(\mathbf{W} | \mathcal{D}_{anchor})$.
\begin{equation}
\begin{split}
    \mathcal{D}_{\text{KL}}&[q_\theta(\mathbf{W}) \parallel p(\mathbf{W} | \mathcal{D}_{anchor})] \\
    &= \mathbb{E}_{\mathbf{W} \sim q_\theta} \left[ \log \frac{q_\theta(\mathbf{W})}{p(\mathbf{W} | \mathcal{D}_{anchor})} \right].
\end{split}
\end{equation}
Using the property of logarithms $\log(a/b) = \log a - \log b$, we expand the term inside the expectation into two separate components: entropy and cross-entropy-like terms:
\begin{equation}
\begin{split}
    \mathcal{D}_{\text{KL}}[q_\theta \parallel p(\cdot | \mathcal{D})] &= \mathbb{E}_{q_\theta} [\log q_\theta(\mathbf{W})] \\
    &\quad - \mathbb{E}_{q_\theta} [\log p(\mathbf{W} | \mathcal{D}_{anchor})].
\end{split}
\end{equation}
Next, we invoke Bayes' theorem for the posterior term: 
\begin{equation}
    p(\mathbf{W} | \mathcal{D}_{anchor}) = \frac{p(\mathcal{D}_{anchor} | \mathbf{W}) p(\mathbf{W})}{p(\mathcal{D}_{anchor})}.
\end{equation}
Substituting this expansion into the second term of Eq.~(32), and utilizing the linearity of expectation, we obtain:
\begin{equation}
\begin{split}
    \mathbb{E}_{q_\theta} &[\log p(\mathbf{W} | \mathcal{D}_{anchor})] \\
    &= \mathbb{E}_{q_\theta} \left[ \log \frac{p(\mathcal{D}_{anchor} | \mathbf{W}) p(\mathbf{W})}{p(\mathcal{D}_{anchor})} \right] \\
    &= \mathbb{E}_{q_\theta} [\log p(\mathcal{D}_{anchor} | \mathbf{W})] + \mathbb{E}_{q_\theta} [\log p(\mathbf{W})] \\
    &\quad - \mathbb{E}_{q_\theta} [\log p(\mathcal{D}_{anchor})].
\end{split}
\end{equation}
Crucially, observe that the term $\log p(\mathcal{D}_{anchor})$ (the log-evidence) depends solely on the dataset and is independent of the weight variable $\mathbf{W}$. Therefore, it acts as a constant under the expectation $\mathbb{E}_{q_\theta}$:
\begin{equation}
    \mathbb{E}_{q_\theta} [\log p(\mathcal{D}_{anchor})] = \log p(\mathcal{D}_{anchor}).
\end{equation}
Now, we substitute the expanded form back into the original KL divergence equation:
\begin{equation}
\begin{split}
    \mathcal{D}_{\text{KL}}&[q_\theta \parallel p(\cdot | \mathcal{D})] \\
    &= \mathbb{E}_{q_\theta} [\log q_\theta(\mathbf{W})] - \Big( \mathbb{E}_{q_\theta} [\log p(\mathcal{D}_{anchor} | \mathbf{W})] \\
    &\quad + \mathbb{E}_{q_\theta} [\log p(\mathbf{W})] - \log p(\mathcal{D}_{anchor}) \Big).
\end{split}
\end{equation}
Rearranging the terms to isolate the log-evidence on one side, we reveal the fundamental decomposition:
\begin{equation}
\begin{split}
    \log p(\mathcal{D}_{anchor}) &= \mathcal{D}_{\text{KL}}[q_\theta \parallel p(\cdot | \mathcal{D})] \\
    &\quad + \underbrace{\mathbb{E}_{q_\theta} [\log p(\mathcal{D}_{anchor} | \mathbf{W})]}_{\text{Reconstruction Term}} \\
    &\quad - \underbrace{\left( \mathbb{E}_{q_\theta} [\log q_\theta(\mathbf{W})] - \mathbb{E}_{q_\theta} [\log p(\mathbf{W})] \right)}_{\text{Regularization Term}}.
\end{split}
\end{equation}
We recognize that the last grouped term corresponds exactly to the KL divergence between the variational distribution and the prior $p(\mathbf{W})$:
\begin{equation}
    \mathbb{E}_{q_\theta} \left[ \log \frac{q_\theta(\mathbf{W})}{p(\mathbf{W})} \right] = \mathcal{D}_{\text{KL}}[q_\theta(\mathbf{W}) \parallel p(\mathbf{W})].
\end{equation}
Thus, we arrive at the final identity:
\begin{equation}
    \log p(\mathcal{D}_{anchor}) = \mathcal{D}_{\text{KL}}[q_\theta \parallel p(\cdot | \mathcal{D})] + \mathcal{L}_{\text{ELBO}}(\theta),
\end{equation}
where the Evidence Lower Bound (ELBO) is defined as:
\begin{equation}
\begin{split}
    \mathcal{L}_{\text{ELBO}}(\theta) &= \mathbb{E}_{q_\theta} [\log p(\mathcal{D}_{anchor} | \mathbf{W})] \\
    &\quad - \mathcal{D}_{\text{KL}}[q_\theta(\mathbf{W}) \parallel p(\mathbf{W})].
\end{split}
\end{equation}
Since $\mathcal{D}_{\text{KL}} \ge 0$ (Gibbs' inequality), $\mathcal{L}_{\text{ELBO}}$ is indeed a lower bound on the log-evidence. More importantly, because $\log p(\mathcal{D}_{anchor})$ is fixed with respect to $\theta$, maximizing $\mathcal{L}_{\text{ELBO}}(\theta)$ is strictly equivalent to minimizing the discrepancy $\mathcal{D}_{\text{KL}}[q_\theta \parallel p(\cdot | \mathcal{D})]$.
\end{proof}

\section{Datasets}
\label{sup:datasets}
In this section, we provide a detailed introduction to two benchmark datasets widely recognized in the field of Composed Image Retrieval and adopted in our experiments, specifically FashionIQ and CIRR. The details are as follows:

\begin{itemize} \item FashionIQ~\cite{FashionIQ} serves as a standard dataset for evaluating Composed Image Retrieval within the fashion domain. It comprises 77,684 fashion images crawled from the web, totaling 30,134 annotated triplets. The content is primarily divided into three core categories: dresses, shirts, and toptees. This dataset is utilized mainly to assess retrieval performance in fashion scenarios, with a particular emphasis on evaluating the ability of the model to align visual content with descriptive modification texts. \item CIRR~\cite{cirr} utilizes real-world images derived from NLVR2, which is a natural language visual reasoning dataset. CIRR comprises 36,554 annotated triplets and 21,552 images. Unlike the domain-specific nature of FashionIQ, CIRR places greater emphasis on complex interactions among multiple objects in natural environments. This characteristic effectively mitigates the risk of model overfitting to a single domain. Furthermore, it addresses the issue of substantial false negatives caused by incomplete annotations, a problem observed in FashionIQ, by including a dedicated subset for fine-grained contrastive evaluation. Consequently, CIRR represents an ideal choice for evaluating the capability of a model to handle complex scenarios, comprehend object interactions, and fuse multimodal information. \end{itemize}

\section{Cross-Validation of EPA}
\label{sup:cross-vali}
The External Prior Arbitration (EPA) module constitutes the cornerstone of the Air-Know framework. 
As articulated in the main text, our primary challenge lies in breaking the ``self-dependent vicious cycle'' between the Learner and the Arbiter. To achieve this decoupling, we introduce the EPA module, whose sole objective is to leverage Multimodal Large Language Models (MLLMs), such as GPT-4o, as an \textit{offline expert arbiter}. Its purpose is to provide high-precision, reliable binary labels (i.e., ``Clean'' vs. ``Noisy'') for a subset of the training data. The final output of this process is a small-scale, high-precision Anchor Dataset, $D_{anchor}$, which is subsequently employed in the second phase (EKI) to supervise the lightweight proxy arbiter.

The ``cross-validation strategy'' mentioned in Sec. 3.2  relies on a multi-step, logically sophisticated prompt design. While the detailed prompts provided to the MLLM are available in Appendix~\ref{sup:prompt}, we elaborate on the three core analysis stages executed:

\noindent \textbf{Step1: Deconstruct Inputs}.
The objective of this phase is to compel the MLLM to analyze each component of the triplet independently and unbiasedly, establishing a factual foundation for subsequent comparisons and reasoning: 
\begin{itemize}[leftmargin=8pt]
    \item \textbf{Input:}A multimodal triplet ($I_r, T_m, I_t$) randomly sampled from the training pool.
    \item \textbf{Process:} 1) \textit{Analyze Reference Image ($I_r$)}. The MLLM is instructed to first analyze the reference image in isolation, identifying key objects, salient attributes, background scenes, and their relationships. 2) \textit{Analyze Target Image ($I_t$)}. Subsequently, the MLLM analyzes the target image independently in the same manner. 3) \textit{Analyze Modification Text ($T_m$)}. Finally, the MLLM parses the modification text to accurately comprehend the intended modification actions (e.g., add, remove, replace, change color).
    \item \textbf{Output:} The output of this stage is a set of structured `internal factual descriptions' ($Desc_{r},Desc_{m},Desc_{t}$). These descriptions represent the MLLM's internal working state and are fed as a holistic input into the second phase.
\end{itemize}
By enforcing sequential processing, this phase effectively prevents ``jumping to conclusions'' or ``confirmation bias'' It ensures that subsequent reasoning is grounded in independent, objective observations of each component.

\noindent\textbf{Step2: Compare \& Reason}.
This phase constitutes the core of the EPA strategy, where the ``cross-validation'' occurs. The MLLM executes complex logical reasoning to judge the internal consistency of the triplet.
\begin{itemize}[leftmargin=8pt]
    \item \textbf{Input:} The structured descriptions from Step 1.
    \item \textbf{Process:} 
    1) \textit{Infer Actual Differences ($\Delta T_{I}$):} The MLLM's first reasoning step is to temporarily ignore $T_m$. It is instructed to strictly follow the Prompt requirements to compare only $Desc_{r}$ and $Desc_{t}$, aiming to objectively describe the actual visual changes that occurred from the reference to the target image. This inferred change,  $\Delta T_{I}$, represents the ``factual change'' observed by the MLLM, corresponding to the inferred instruction $\Delta T_{I}$ mentioned in the main text. 
    2) \textit{Verify ``Instruction'' vs. ``Fact'':} The MLLM performs the critical cross-validation by comparing the semantic consistency between $T_m$ (the given instruction) and $\Delta T_{I}$ (the observed fact).
    3) \textit{Apply ``Key Principles'' (Handling Ambiguity):} 
    In NTC scenarios, substantial noise is not entirely irrelevant but manifests as ``partially match''. Our Prompt enforces the principle that human annotation allows for incomplete information. For ``Clean'' samples: As long as the core change described in the modification text actually occurred between the reference and target images, the MLLM must classify it as ``Clean'', even if ``unmentioned minor discrepancies'' (e.g., slight pose changes, background shifts) exist. This ensures the expert correctly handles clean samples.
    4) \textit{NTC Cause Diagnosis:} 
    If a severe logical disconnection or contradiction exists between $T_m$ and $\Delta T_{I}$, the MLLM classifies it as a noisy triplet correspondence. However, the core difficulty lies in partially matched samples, which are neither entirely noisy nor entirely correct. To ensure the integrity and correctness of the expert logic chain, we require the MLLM to further diagnose the root cause of the NTC classification and categorize it according to the NTC definitions: 
    \begin{itemize}
        \item Mismatched Modification Text: 
        The text $T_m$ describes an operation completely unrelated to $\Delta T_{I}$. For example, the actual visual change $\Delta T_{I}$ is ``car turns blue'', but the text $T_m$ is ``add a chair''.
        \item Mismatch Reference Imgae: 
        The modification text $T_m$ and target image $I_t$ are logically self-consistent, but the reference image is completely unrelated. For instance, $T_m$ and $I_t$ correspond to ``church spire turns gold'', but the reference image $I_r$ is an unrelated ``forest''.
        \item Mismatched Target Image: 
        The reference image $I_r$ and text $T_m$ jointly point to a clear expectation, but $I_t$ does not match.
    \end{itemize}
    
    \item \textbf{Output:} 
    The output of this stage is an exhaustive ``Reasoning Chain'', which fully documents the MLLM's logic from factual inference to cross-validation, principle application, and final diagnosis. This reasoning chain serves as the input for the final phase.
\end{itemize}

\noindent\textbf{Step3: Judge \& Conclude}.
The goal of this phase is to compile the MLLM's complex internal reasoning process into a standardized final format for dataset construction.
\begin{itemize}[leftmargin=8pt]
    \item \textbf{Input:} The Reasoning Chain output from Step 2.
    \item \textbf{Process:} 
    The MLLM receives the complete reasoning chain and populates its analysis (including descriptions from Step 1 and reasoning from Step 2) into a strict structure defined by our required output format in the prompt.
    \item \textbf{Output:} The final output of the EPA Chain-of-Thought for a single triplet contains two key components:
    1) \textit{Analysis Workflow}. Contains the MLLM's complete thought process, including the structured descriptions $Desc_{r}, Desc_{m}, Desc_{t}$ from Step 1 and the Reasoning Chain from Step 2. This ensures the interpretability and traceability of the expert knowledge.
    2) \textit{Final Judgement}. 
    Contains the MLLM's final verdict, consisting of a binary decision label accompanied by a concise summary of the core rationale.
    
\end{itemize}
Ultimately, we iterate this chain-of-thought process on the sampled small-scale subset to collect labels for individual samples, thereby constructing the anchor dataset $D_{anchor}$. This successfully consolidates the judgment knowledge of the MLLM expert, preparing for the training of the EKI module and achieving the thorough decoupling of the ``Arbiter'' and the ``Learner''.

\section{Training Details}
\label{sup:training_details}

In this section, we provide a comprehensive description of the network architecture, the optimization procedure, and the proposed Two-Stage Progressive Training Strategy to ensure the reproducibility of Air-Know.

\subsection{Architecture of the Lightweight Proxy}
\label{sup:d1}
The Expert-Knowledge Internalization (EKI) module utilizes a lightweight Multi-Layer Perceptron (MLP) to internalize the expert priors. Given the backbone feature dimension $D=256$, the input Geometric Deconstruction Vector (GDV) possesses a dimension of $4D=1024$. The EKI proxy is implemented as a three-layer MLP ($1024 \rightarrow 512\rightarrow 256\rightarrow 1$). We apply ReLU activations and Dropout ($p=0.1$) after the first two linear layers, while the final layer employs a Sigmoid function to output the confidence score $\hat{c}$. This lightweight design ensures the proxy captures the non-linear geometric boundaries without introducing significant computational overhead.

\subsection{Two-Stage Progressive Training Strategy}
\label{sup:d2}
To effectively decouple the arbiter from the learner and strictly prevent the ``self-dependent vicious cycle'', we adopt a progressive training strategy consisting of two distinct stages:

\noindent\textbf{Stage 1: Expert Internalization (Warm-up).} 
In this initial phase, our primary goal is to establish a reliable arbiter. We freeze the entire backbone network and exclusively optimize the parameters of the EKI module. 
Specifically, we construct the anchor dataset $\mathcal{D}_{anchor}$ by applying the EPA module to a randomly sampled subset comprising $40$ batches, with a batch size of $256$. 
Subsequently, utilizing this small-scale $\mathcal{D}_{anchor}$, the EKI module is trained to minimize the internalizing loss $\mathcal{L}_{EKI}$ (Eq. (9) in the main text) for $2$ epochs. 
This stage enables the proxy to learn a robust probabilistic decision boundary for identifying noisy correspondence before interacting with the main model.

\noindent\textbf{Stage 2: Dual-Stream Reconciliation.}
In the second phase, we train the representation model on the full dataset. Crucially, we \textbf{freeze} the parameters of the EKI module obtained from Stage 1. This ensures the gating signals remain stable and independent of the learner's current state. The confidence scores $\hat{c}$ generated by the frozen EKI dynamically guide the training samples into either the clean alignment stream or the feedback reconciliation stream. The backbone is then optimized using the total objective (Eq. (14) in the main text).

\section{Additional Quantitative Analysis}
\label{sup:additional_quantitative_analyses}

\begin{table*}[t!]
  \centering
  \caption{We conducted a comprehensive evaluation of computational complexity and operational efficiency, encompassing FLOPs, parameter counts, GPU memory overhead, and physical runtime. Under a unified batch size setting (Batch Size = 128), compared to the strong robust baseline TME, Air-Know achieved approximately a three-fold training acceleration with the lowest FLOPs while maintaining optimal retrieval accuracy, thereby demonstrating its optimal balance between efficiency and robustness.}
    \resizebox{\linewidth}{!}{%
    \begin{tabular}{c|c|c|c|c|c|c|c|c}
    \Xhline{1.5pt}
    Type  & Method & FLOPs(G) & Parameters(M) & GPU Memory(MiB) & Test time(s/sample) & Train Time(s/iteration) & FashionIQ-Avg & CIRR-Avg \\
    \hline
    Ordinary & SPRC  & 413.38 & 915.69 & 24478(bs=128) & 0.011 & \textbf{2.624(bs=128)} & 56.33 & 76.98 \\
    \hline
    
    \multirow{2}[2]{*}{Robust} & TME   & 405.20 & \textbf{915.68} & \textbf{12405}(bs=128) & 0.124 & 7.858(bs=128) & 63.97 & 79.74 \\
           & \cellcolor[HTML]{E0E9F7}Air-Know(Ours) & \cellcolor[HTML]{E0E9F7}\textbf{402.51} & \cellcolor[HTML]{E0E9F7}915.99 & \cellcolor[HTML]{E0E9F7}16590(bs=128) & \cellcolor[HTML]{E0E9F7}\textbf{0.010} & \cellcolor[HTML]{E0E9F7}2.805(bs=128) & \cellcolor[HTML]{E0E9F7}\textbf{65.45} & \cellcolor[HTML]{E0E9F7}\textbf{80.40} \\
    \Xhline{1.5pt}
    \end{tabular}%
    }
  \label{tab:efficiency}%
\end{table*}%

\subsection{Efficiency Evaluation}
\label{sup:quantitative_efficiency}
To evaluate system efficiency and resource consumption, we compared the ordinary method SPRC, the robust method TME, and our Air-Know under identical hardware conditions with a batch size of 128. The experimental results are presented in Table~\ref{tab:efficiency}, and the specific analysis is as follows:

1) In terms of computational cost (FLOPs), Air-Know (402.51G) exhibits the lowest computational overhead, representing a reduction of approximately 0.66\% compared to TME (405.20G) and 2.63\% relative to SPRC (413.38G). This result indicates that Air-Know effectively reduces the computational load while maintaining performance. Furthermore, the parameter counts for all models remain at approximately 915M, further demonstrating that the performance improvements of Air-Know do not stem from an increase in model scale.

2) Regarding GPU memory consumption, the requirement of Air-Know (16590 MiB) is higher than that of TME (12405 MiB). This increase in memory usage is within expectations and is primarily attributed to the training mechanism of Air-Know. During the training phase, the model is required to run the backbone network in parallel with the lightweight Expert-Knowledge Internalization proxy (utilized for generating matching confidence $\hat{c}$ in real time) while simultaneously maintaining gradient computations for both the clean alignment stream and the feedback reconciliation stream. We posit that exchanging moderate memory usage for enhanced training robustness and superior final feature quality represents a reasonable trade-off under current hardware conditions.

3) In terms of training efficiency, we adopt a rigorous evaluation metric that incorporates both the one-off offline MLLM labeling process and the EKI warmup phase. Under this comprehensive setting, Air-Know records a training time of 2.805 s/iter. While this presents a slight increase over SPRC (2.624 s/iter) due to the inclusion of the aforementioned components, Air-Know still demonstrates a substantial advantage over the robust baseline TME (7.858 s/iter), achieving a $\sim$3$\times$ speedup (approx. 64.9\% reduction in training time). Regarding inference efficiency, the auxiliary EKI module is explicitly discarded during the test phase, allowing the system to rely solely on the optimized backbone. Consequently, Air-Know achieves a test time of 0.010 s/sample, surpassing TME (0.124 s/sample) by a margin of 12.4$\times$, ensuring high efficiency for real-world deployment.

\subsection{Additional Hyperparameter Analysis}
\label{sup:quantitative_param}

\begin{figure}[t!]
    \centering
    \resizebox{\linewidth}{!}{\includegraphics[width=\linewidth]{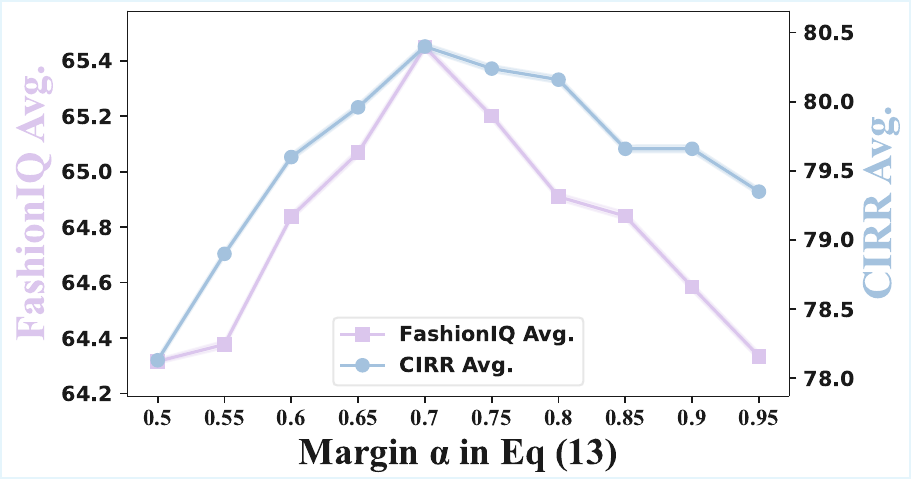}}
    \caption{Sensitivity analysis of the margin $\alpha$. We evaluated the impact of it in Equation (13), which serves as a parameter to control the threshold for penalizing noisy correspondence in the feedback reconciliation stream. A lower $\alpha$ imposes stricter filtering on semantically similar samples, while a higher $\alpha$ allows more samples exhibiting uncertainty to pass through, thereby creating distinct trade-offs between noise handling and sample retention.}
    \label{fig:alpha_sensi}
\end{figure}

We investigated the impact of the tolerance margin $\alpha$ within the feedback reconciliation stream (Equation 13 in the main text). This hyperparameter acts as a threshold to penalize high-similarity noisy correspondence, specifically samples exhibiting low matching confidence $\hat{c}$ but high similarity $s$. As illustrated in Figure~\ref{fig:alpha_sensi}, we present its sensitivity curves on the FashionIQ and CIRR datasets.

It can be observed from the figure that as $\alpha$ increases from 0.5 to 0.95, the performance curves on both datasets exhibit a distinct inverted V-shaped trend, reaching their peak performance simultaneously at $\alpha = 0.7$. We explain this phenomenon by analyzing the gradient suppression mechanism within the Noisy Triplet Correspondence (NTC) scenario as follows:

\noindent \textbf{Over-correction caused by excessively small $\alpha$:} When $\alpha < 0.7$, the tolerance threshold is set excessively low. Consequently, even weakly correlated noisy correspondence sharing only partial basic semantics, such as similar backgrounds or objects of the same category, triggers the penalty mechanism of $\mathcal{L}_{Recon}$. This overly aggressive gradient suppression strategy not only pushes away noise but also disrupts reasonable semantic continuity within the feature space. This hinders the model from learning intra-class commonalities, thereby leading to performance degradation, for instance, FashionIQ drops to approximately 64.3 when $\alpha=0.5$.

\noindent \textbf{Ineffective Constraints caused by excessively large $\alpha$:} When $\alpha > 0.7$, the tolerance threshold is set excessively high. In this case, only extreme hard noise that is highly similar to the query features triggers the loss function, while the vast majority of interfering samples with moderate similarity are ignored as they satisfy the condition $s/\tau < \alpha$ (i.e., the loss becomes 0). This results in sparse or vanishing gradients within the feedback reconciliation stream, rendering it unable to effectively execute the de-pollution task and causing gradual degradation in model performance.

Consequently, $\alpha = 0.7$ represents an optimal balance point. It allows noisy correspondence to retain reasonable basic visual similarity while effectively penalizing erroneous high-similarity matches, thereby enhancing the robustness of Air-Know against noise.

\section{Additional Ablation Study}
\label{sup:additional_ablation}

\subsection{Ablation Study of MLLMs}
\label{sup:additional_ablation_MLLM}

\begin{table*}[htbp]
  \centering
  \caption{Performance and Efficiency Analysis of MLLMs in the EPA Module. We evaluated the discrimination accuracy and inference speed of various MLLMs across different noise levels. Under a unified experimental setting with a batch size of 256, we employed 256 concurrent threads to invoke the corresponding APIs, thereby achieving an efficient experimental workflow.}
  \resizebox{\linewidth}{!}{%
    \begin{tabular}{c|l|c|c|c|c|c|c|c}
    \Xhline{1.5pt}
    Noise & MLLM   & $s/sample \downarrow$ & FIQ-$Accuracy(\%)\uparrow$ & CIRR-$Accuracy(\%)\uparrow$ & $s/batch\downarrow$ & FIQ-R@10-Avg & FIQ-R@50-Avg & CIRR-Avg \\
    \hline
    \hline
    \multirow{8}{*}{20\%}
    & gpt-5 & 20  &89.85  & \textbf{90.62} & 129 &\textbf{55.45} &\textbf{75.88} &\textbf{80.61}\\
    & gpt-5-mini  & \textbf{9}  &79.14   & 80.47 & 41 &54.96 &75.55 &80.22\\
    & Llama-3.2-Vision-90B  & 17  & 75.32  & 76.95 & 80 &54.78 &75.42 &80.10\\
    & claude-sonnet-4.5  & 17  & 85.22  & 86.00  & 57 &55.22 &75.74 &80.45\\
    & Qwen3-VL-235B-Instruct  & 32 &73.95  & 75.78 & 204 &54.73 &75.39 &80.05\\
    & gpt-5-nano & 10  &81.66  & 82.81 & 37 &55.07 &75.63 &80.31\\
    & gemini-2.5-pro & 15  &72.58  & 74.22 & 68 &54.65 &75.32 &79.98\\
    
    & \cellcolor[HTML]{E0E9F7}gpt-4o   & \cellcolor[HTML]{E0E9F7}10    & \cellcolor[HTML]{E0E9F7}84.09& \cellcolor[HTML]{E0E9F7}85.16 & \cellcolor[HTML]{E0E9F7}\textbf{20} & \cellcolor[HTML]{E0E9F7}55.18& \cellcolor[HTML]{E0E9F7}75.71& \cellcolor[HTML]{E0E9F7}80.40\\
    \hline
    \multirow{8}{*}{50\%}
    & gpt-5 & 20 & 92.51   & \textbf{93.15} & 129 &\textbf{53.68}		 &\textbf{74.22} &\textbf{79.15}\\
    & gpt-5-mini  & \textbf{9}   & 82.68  & 84.03 & 41 &53.20	 &73.88	 &78.75\\
    & Llama-3.2-Vision-90B  & 17  &78.45  & 80.12 & 80 &52.98	 &73.72	 &78.56\\
    & claude-sonnet-4.5  & 17 &89.12   & 89.95  & 57 &53.51	 &74.10	 &79.01\\
    & Qwen3-VL-235B-Instruct  & 32 &78.56  & 80.44 & 204 &53.01	 &73.75	 &78.59\\
    & gpt-5-nano & 10 &84.39   & 85.67 & 37 &53.28	 &73.94	 &78.82\\
    & gemini-2.5-pro & 15  & 75.35  & 77.08 & 68 &52.82	 &73.61	 &78.43\\
    
    & \cellcolor[HTML]{E0E9F7}gpt-4o   & \cellcolor[HTML]{E0E9F7}10 & \cellcolor[HTML]{E0E9F7}87.15   & \cellcolor[HTML]{E0E9F7}88.28 & \cellcolor[HTML]{E0E9F7}\textbf{20} &\cellcolor[HTML]{E0E9F7}53.42 &\cellcolor[HTML]{E0E9F7}74.04 &\cellcolor[HTML]{E0E9F7}78.93\\
    \hline
    \multirow{8}{*}{80\%}
    & gpt-5 & 20  & 93.88  & \textbf{94.43} & 129  &\textbf{50.18}	 &\textbf{70.72}	 &\textbf{77.05}\\
    & gpt-5-mini  & \textbf{9}   & 85.52  & 86.88 & 41 &49.75	 &70.41	 &76.68\\
    & Llama-3.2-Vision-90B  & 17 & 81.50   & 83.25 & 80 &49.54	 &70.26	 &76.50\\
    & claude-sonnet-4.5  & 17 & 91.85   & 92.64  & 57 &50.08	 &70.65	 &76.96\\
    & Qwen3-VL-235B-Instruct  & 32 & 80.35  & 82.30 & 204 &49.49	 &70.22	 &76.45\\
    & gpt-5-nano & 10  & 87.18  & 88.52 & 37 &49.85	 &70.48	 &76.78\\
    & gemini-2.5-pro & 15 & 79.42   & 81.15 & 68 &49.42	 &70.18	 &76.39\\
    
   & \cellcolor[HTML]{E0E9F7}gpt-4o   & \cellcolor[HTML]{E0E9F7}10    & \cellcolor[HTML]{E0E9F7}90.25& \cellcolor[HTML]{E0E9F7}91.41 & \cellcolor[HTML]{E0E9F7}\textbf{20} & \cellcolor[HTML]{E0E9F7}50.01 & \cellcolor[HTML]{E0E9F7}70.60 & \cellcolor[HTML]{E0E9F7}76.90\\
    \Xhline{1.5pt}
    \end{tabular}%
    }
  \label{tab:ablation_MLLM}%

\end{table*}%

In the Air-Know framework, the External Prior Arbitration (EPA) module plays a pivotal role. Its core task is to utilize the MLLM as an offline expert to construct a high-precision Anchor Dataset ($D_{anchor}$) and through this dataset guide the subsequent lightweight agent (the EKI module) in learning noise discrimination logic. 
Although the EPA operates offline, constructing $D_{anchor}$ still necessitates the processing of large volumes of data. The substantial inference overhead of MLLMs serves as the primary bottleneck. Consequently, it is necessary to find the optimal balance between expert-level discrimination quality and acceptable temporal and economic costs.

Based on the aforementioned requirements, as shown in Table~\ref{tab:ablation_MLLM}, we compared mainstream open-source MLLMs (Llama-3.2-Vision-90B, Qwen3-VL-235B-Instruct) and closed-source MLLMs (GPT-4o, Claude-Sonnet-4.5, Gemini-2.5-Pro, GPT-5, etc.).We obtained the following observations: 
\textbf{1) Complex instruction-following capability:} When executing the deconstruction-reasoning-determination chain of EPA, GPT-5 and Claude-Sonnet-4.5 demonstrated extremely high accuracy, capable of precisely identifying highly deceptive noisy correspondences. 
\textbf{2) Reasoning efficiency and throughput:} Efficiency is a key consideration in our model selection. Although GPT-5 demonstrates superior performance, its batch processing time ($s/batch$) reaches 129 seconds, which is too slow and expensive for constructing $D_{anchor}$. In contrast, GPT-4o requires only 20 seconds for batch processing, making its efficiency more than six times that of GPT-5. \textbf{3) Robustness trade-off:} Under tests with different noise ratios (20\%, 50\%, 80\%), GPT-4o exhibited excellent stability. Particularly under the extreme noise level of 80\%, GPT-4o maintained an accuracy of 91.41\%, which is only slightly lower than that of GPT-5 (94.43\%) but higher than other lightweight models. This indicates that GPT-4o can provide sufficiently clean supervision signals for the EKI module to internalize while maintaining high efficiency.

\subsection{Ablation Study of EPA}
\label{sup:additional_ablation_IMA}
\begin{table}[htbp]
  \centering
  \caption{Ablation results of EPA module on the CIRR dataset under different noise ratios ($\sigma = 0.2, 0.5, 0.8$). Specifically, w/o Step 1 removes the Deconstruct Inputs, compelling the model to execute comparisons directly without establishing objective factual anchor points. w/o Step 2 removes the Compare \& Reason, omitting the crucial logical cross-verification stage and jumping directly from observation to determination. w/o Step 1 \& 2 removes all structured intermediate steps, retaining only the naive end-to-end binary decision.}
    \begin{tabular}{l|c|c|c}
\Xhline{1.5pt} 
   Variant & $\sigma = 0.2$   & $\sigma = 0.5$   & $\sigma = 0.8$ \\
    \hline
    \hline
    w/o Step1 & 83.38 & 86.84 & 89.25 \\
    \hline
    w/o Step2 & 76.56 & 87.50 & 89.72 \\
    \hline
    w/o Step1\&2 & 75.78 & 86.72 & 87.28 \\
    \hline
    \rowcolor[HTML]{E0E9F7}
    Ours  & 85.16 & 88.28 & 91.41 \\
\Xhline{1.5pt}
    \end{tabular}
  \label{tab:MLLM_analog}
\end{table}

In the External Prior Arbitration (EPA) module, we employ a three stage cross validation strategy as follows:

\textit{Step 1:} Deconstruct Inputs. The model independently analyzes the visual content of the reference image ($I_r$) and the target image ($I_t$) and separately comprehends the modification intent of the modification text ($T_m$), establishing objective factual descriptions.

\textit{Step 2:} Compare \& Reason. The model infers the actual visual change ($\Delta T_I$) between the images and cross-verifies its semantic consistency with the textual instruction ($T_m$) to diagnose the presence of NTC noisy correspondence.

\textit{Step 3:} Judge \& Conclude. Based on the reasoning diagnosis described above, the model outputs a final binary determination (Clean or Noisy), thereby constructing a high-precision anchor dataset.

\subsubsection{Quantitative Results}

To verify the necessity of our three-stage cross-validation strategy, we conducted ablation experiments on the CIRR dataset across varying noise ratios ($\sigma \in \{0.2, 0.5, 0.8\}$). 
Our core hypothesis is that a reliable arbiter must simultaneously possess two capabilities: \textbf{1) Robustness against high-level semantic contradictions (NTC)} and \textbf{2) Tolerance for minor visual discrepancies (partial matches)}.

As shown in Table~\ref{tab:MLLM_analog}, we compared our complete prompt (Figure~\ref{fig:prompt1}) against three variants: removing independent deconstruction (w/o Step 1, Figure~\ref{fig:prompt2}), removing explicit reasoning (w/o Step 2, Figure~\ref{fig:prompt3}), and an end-to-end baseline (w/o Step 1\&2, Figure~\ref{fig:prompt4}). 
These comparisons yielded the following observations:

\noindent \textbf{1) Absence of objective anchors leads to text-induced confirmation bias:} Removing Step 1 forces the MLLM to compare inputs without establishing independent objective factual descriptions. In high-noise environments ($\sigma=0.8$), its accuracy exhibited a significant decline compared to the complete model (91.41\%), dropping to 89.25\% ($\Delta = -2.16\%$).
In NTC scenarios, particularly where the actual visual change is completely unrelated to the text instruction, misleading text ($T_m$) often causes the model to ``hallucinate'' non-existent visual changes. By generating independent factual descriptions ($Desc_r, Desc_m, Desc_t$), Step 1 effectively severs this interference, ensuring that the determination is based on objective facts.

\noindent \textbf{2) Absence of reasoning chain causes rigid rejections of valid partial matches:} Skipping the ``Compare \& Reason'' stage (w/o Step 2) caused accuracy at $\sigma=0.2$ to plummet to 76.56\%, compared to Air-Know (85.16\%). Under low-noise ($\sigma=0.2$) setting, the dataset contains a large number of partial match samples.
Lacking the reasoning process in Step 2 that infers the actual difference ($\Delta T_I$) and performs cross validation with $T_m$, the model struggles to distinguish between ``tolerable visual discrepancies'' and ``fundamental semantic contradictions'', mistakenly discarding valid samples as noise.

\noindent \textbf{3) Dual limitations of unstructured determination:} The end-to-end variant (w/o Step 1\&2) performed worst across all settings. Unstructured prompting fails the core dilemma under NTC scenarios: balancing skepticism against false text with tolerance for partial matches. Lacking Step 1, the model cannot resist text inducement in high-noise scenarios; it is extremely prone to the misdetermination of noise where ``the text describes a change but the visual change does not occur'' as Clean. Lacking Step 2, it cannot understand the rationality of ``matching primary intent but flawed details'' in low-noise scenarios; it tends to capture all minute visual discrepancies and determine them as Noisy.

\subsubsection{Qualitative Results}
To demonstrate the effectiveness of our prompt strategy, Figure~\ref{fig:prompt_ablation} visualizes a highly deceptive partially mismatched sample. The reference image ($I_r$) shows a green cantaloupe, but the text ($T_m$) reads ``Slice open the orange'', and the target ($I_t$) depicts a sliced blood orange. Despite perfect text-target alignment, the operational premise contradicts $I_r$. Thus, it is a noisy triplet. We analyze the four variants' responses below:

\begin{figure*}[t!]
    \centering
\includegraphics[width=0.95\linewidth]{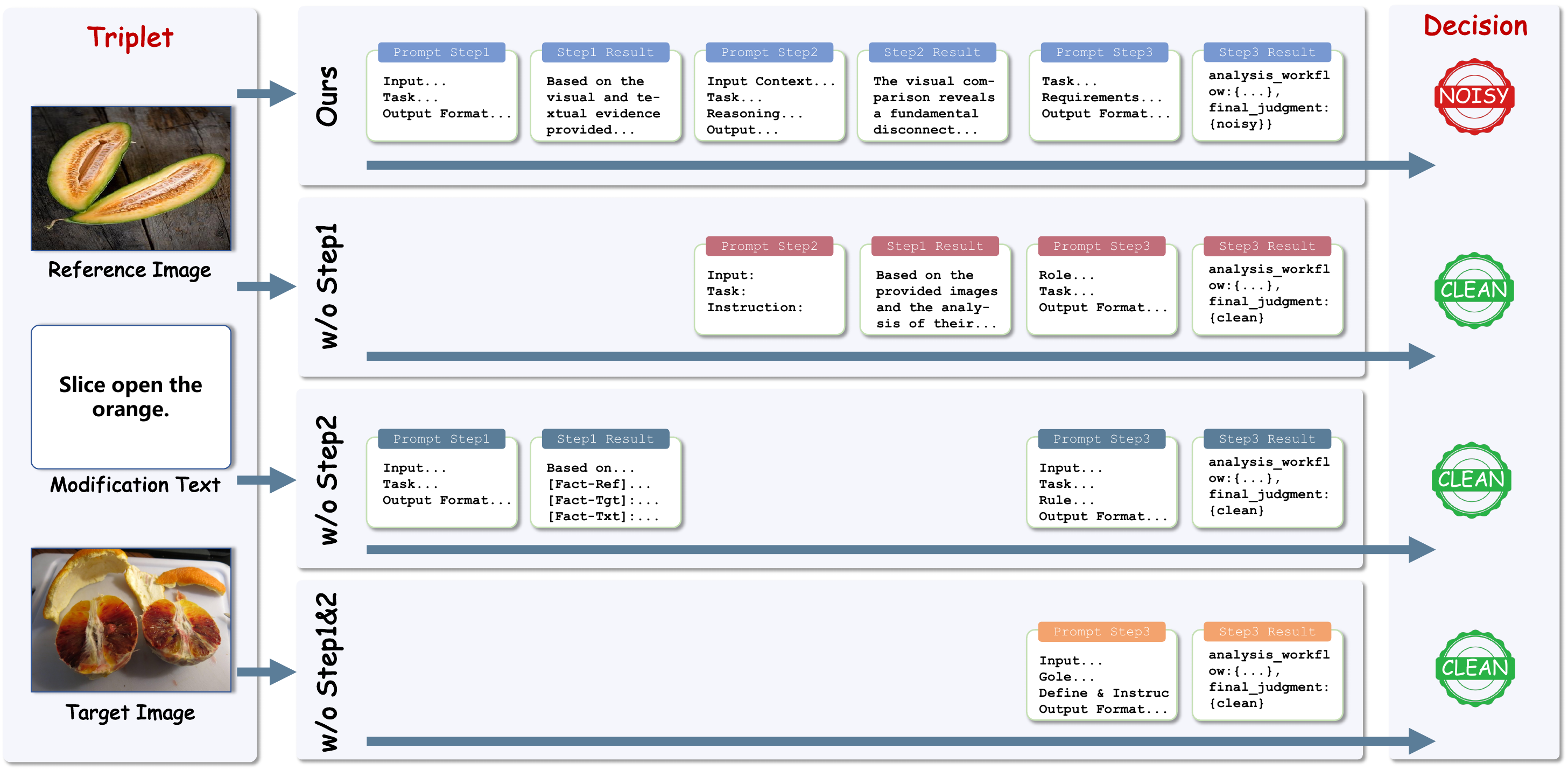}
    \caption{Visualization of prompt design and ablation study on a real-world NTC case. We present a comparison of the reasoning process between the full Air-Know (Ours) and three ablation variants on a typical ``reference image mismatch'' NTC sample. The left side displays the input triplet; the center illustrates the prompt execution flow (with specific steps retained or removed); and the right presents the final determination. In this case, only the complete model successfully identified the logical conflict between the reference image ``a sliced cantaloupe'' and the instruction ``Slice open the orange''. through the complete ``Deconstruct-Reason-Determine'' chain, while all ablation variants fell into confirmation bias and provided an incorrect ``Clean'' determination.}
    \label{fig:prompt_ablation}
\end{figure*}

\noindent \textbf {1) Three stage cross validation prompt (First row, correct determination):} 
The complete EPA model correctly labels $I_r$ as a melon in Step 1. In Step 2, it detects the conflict between the text's operational object (``orange'') and the $I_r$ anchor (melon), successfully diagnosing a Reference Image Mismatch and outputting Noisy.

\noindent \textbf{2) w/o Step 1 (second row, misdetermination):} Upon removing Step 1, the model loses its objective cognition of the reference image and succumbs to typical Confirmation Bias. Due to the absence of a pre-established factual anchor stating ``this is a cantaloupe'', the model focuses its attention entirely on the correspondence between the modification text ($T_m$) and the target image ($I_t$). It observes that the instruction ``Slice open the orange'' is perfectly executed in the target image, thereby generating a visual hallucination; consequently, it subjectively ignores the actual content of the reference image or erroneously assumes that the reference image is an unsliced orange.

\noindent \textbf{3) w/o Step 2 (Third row, misdetermination):} The failure of this variant reveals the importance of logical cross-validation in Step 2. Although the model might have identified the melon and the orange during the observation phase, the absence of an explicit reasoning mechanism for verifying the consistency between the instruction and the facts prevented the model from performing the critical logical check, specifically asking whether the instruction could be applied to the reference image. Faced with the strong semantic correlation between the text and the target image, the model, lacking logical scrutiny, tended to take shortcuts. It made determinations directly based on the matching degree between $T_m$ and $I_t$, thereby ignoring the premise conflict between $I_r$ and $T_m$. 

\noindent \textbf{4) w/o Step 1\&2 (fourth row, misdetermination):} As an end-to-end baseline, in the absence of structured guidance, the model is completely dominated by the evident semantic alignment between the text and the target. It is incapable of processing the complex dependencies within the triplet and fails to recognize the disconnection of the reference image within the logical chain, ultimately blindly rendering an erroneous determination of Clean.
This result compellingly demonstrates that relying solely on perceptual capabilities is insufficient when confronting covert noise such as Partially Mismatch. By anchoring reference facts and conducting logical verification, the EPA strategy effectively prevents the model from succumbing to misdetermination caused by local semantic matching.

\section{More Qualitative Results}

\label{sup:more_qualitative_results}

\begin{figure*}[t]
    \label{fig:propmt_a}
    \centering
    \vspace{-12pt}
    \resizebox{\linewidth}{!}{\includegraphics[width=0.95\linewidth]{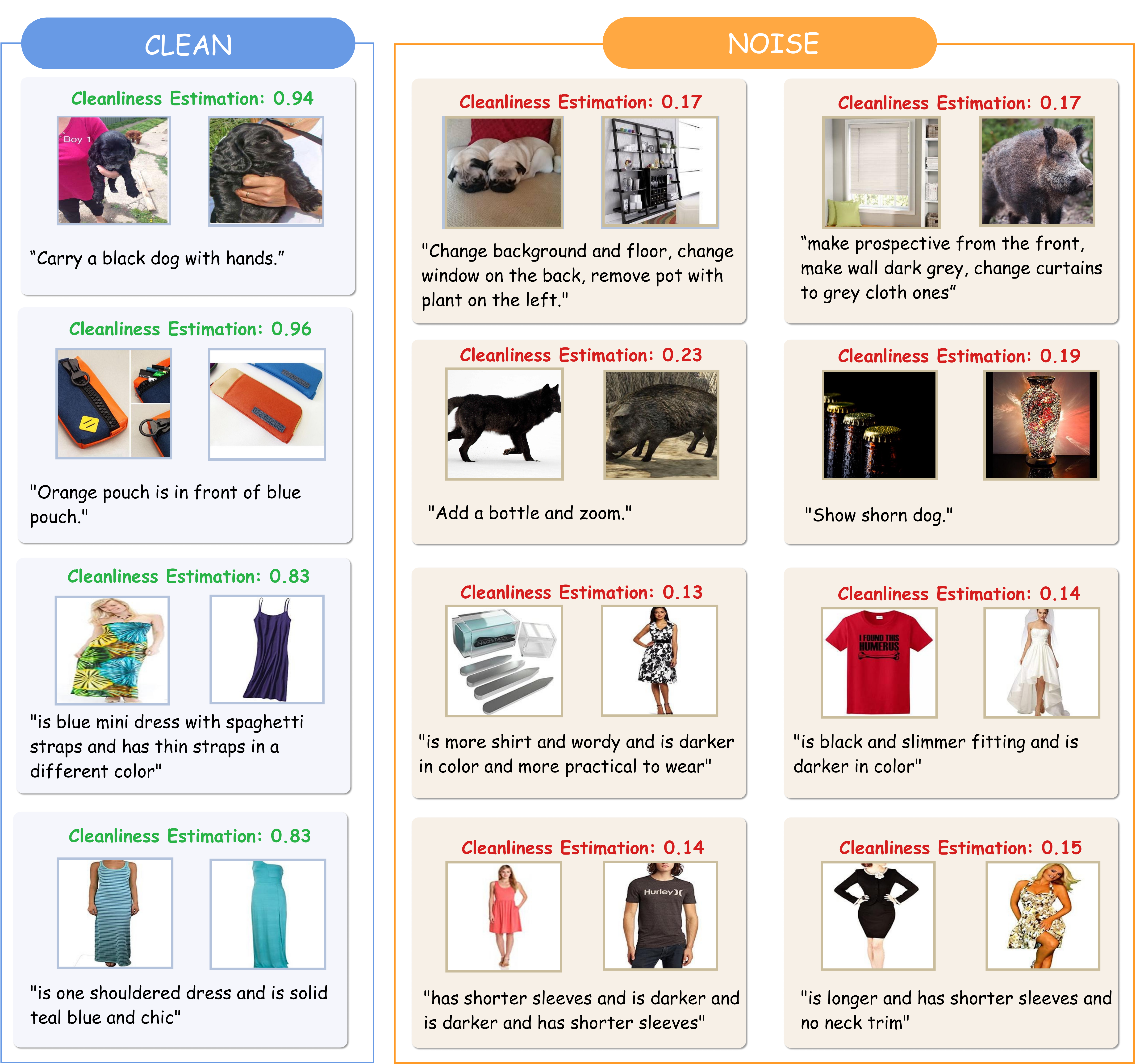}}
    \caption{Visualization of NTC recognition results by the EKI module. We present the discrimination results of the EKI module for triplets. Left (Blue): Semantically consistent samples are mapped to high-confidence regions. Right (Orange): NTC exhibiting mismatch or text-image inconsistency is precisely suppressed, receiving extremely low scores. These reliable estimation values serve as dynamic gating signals for the DSR module.}
        \vspace{-14pt}
    \label{fig:ntc}
\end{figure*}
\vspace{-6pt}

\begin{figure*}[t]
    \centering
    \resizebox{0.9\linewidth}{!}{\includegraphics[width=\linewidth]{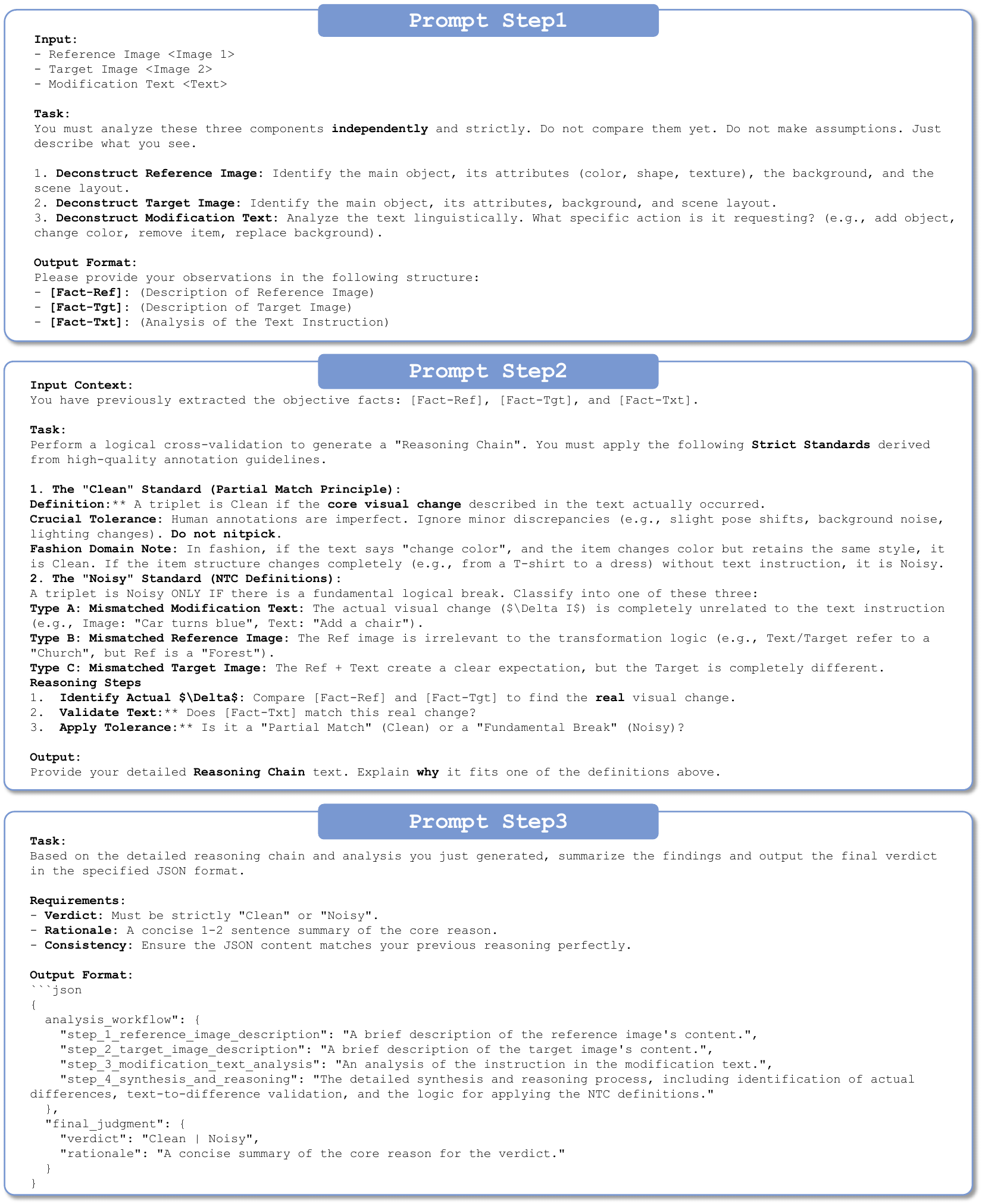}}
    \caption{The complete three stage cross-validation prompt architecture. This design enforces a Deconstruct-Reason-Determine process. Specifically, it guides the model to first deconstruct Inputs to establish objective visual factual anchors, subsequently to diagnose NTC types and verify semantic consistency, and finally to output a determination based on a comprehensive chain of evidence.}
    \label{fig:prompt1}
\end{figure*}

\begin{figure*}[t]
    \centering
    \resizebox{0.9\linewidth}{!}{\includegraphics[width=\linewidth]{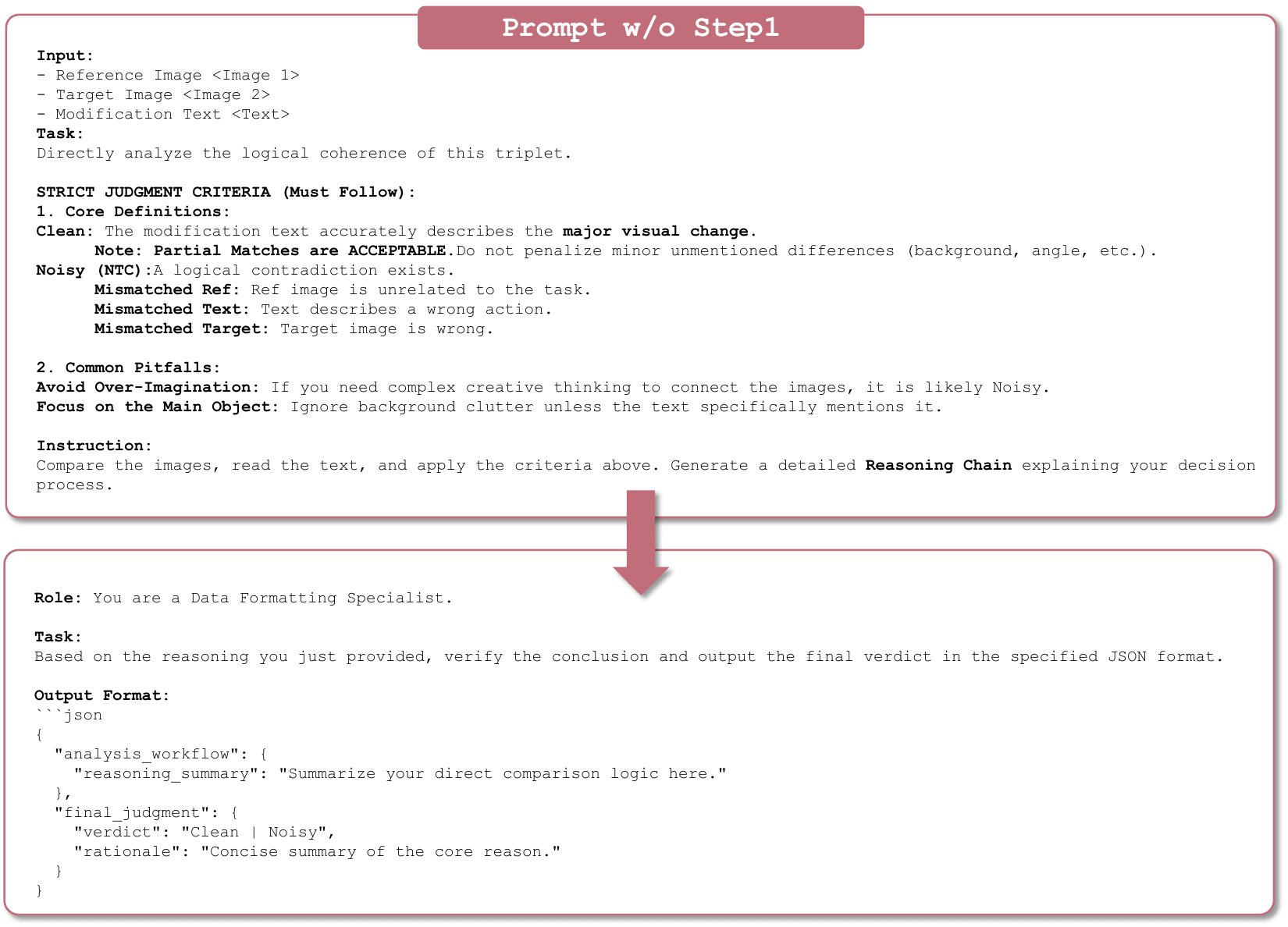}}
    \caption{The prompt variant in which Step 1 (input deconstruction) is removed. In this setting, we eliminated the instruction requiring the model to independently parse the image and text content, thereby compelling the model to execute the comparison directly without establishing objective factual anchors.}
    \label{fig:prompt2}
\end{figure*}

As illustrated in Figure~\ref{fig:ntc}, we visualize the sample discrimination results of Air-Know in NTC scenarios and the cleanliness estimation output by the \textit{Expert-Knowledge Internalization} (EKI) module. The visualization results intuitively confirm that, benefiting from the internalization of expert discriminative logic by EKI, the model accurately distinguishes between logically consistent clean samples and NTC noise containing semantic conflicts. Consequently, it outputs highly credible confidence scores, exhibiting superior noise-robust discriminative capabilities.

Specifically, the blue region on the left illustrates the triplets identified as ``Clean'' by the model, which generally achieved exceptionally high confidence scores. Taking the second row on the left as an example, when presenting the subtle spatial positional and attribute transformations between the reference image and the target image (specifically, the modification text describing ``Orange pouch is in front of blue pouch''), the model assigned a high score of $0.96$. This demonstrates that Air-Know accurately captures fine-grained semantic consistency among multimodal inputs rather than merely relying on superficial visual similarity, thereby validating its profound understanding of effective semantic variations.

In contrast, the orange region on the right displays typical NTC noisy correspondence, to which the model assigns significantly low confidence. For instance, in the middle case of the first row, although the modification text describes a certain renovation change, a fundamental Cross-category Semantic Mismatch exists between the reference image (two pugs) and the target image (shelves). This severe logical disconnection is keenly identified by the model, resulting in an extremely low score of 0.17. Similarly, in the case on the right of the second row, the modification text instruction ``Show shorn dog'' is completely disconnected from the actual visual content presented (bottles and vases). The model assigns a low score of 0.19 to this, demonstrating its high sensitivity to the consistency between the modification text and the target image.

\section{Prompt}
\label{sup:prompt}
We provide a detailed visualization of the specific prompt design logic utilized in the experiments in Figures \ref{fig:prompt1}, \ref{fig:prompt2}, \ref{fig:prompt3}, and \ref{fig:prompt4}.

\begin{figure*}[t]
    \centering
    \resizebox{0.9\linewidth}{!}{\includegraphics[width=\linewidth]{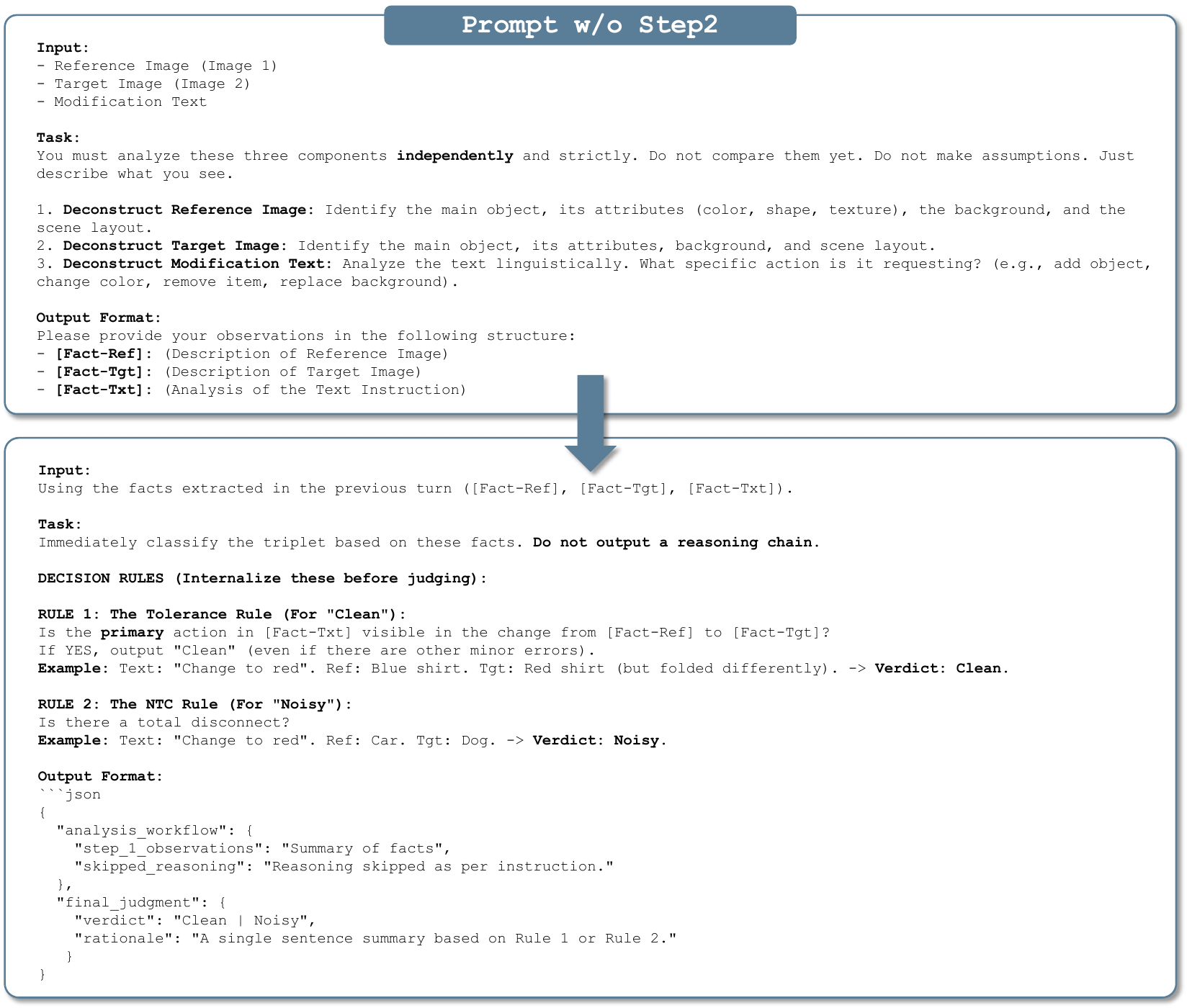}}
    \caption{The prompt variant in which Step 2 (Comparison and Reasoning) is removed. While this design retains the preliminary observation of the inputs, it omits the core reasoning chain involving the inference of the actual visual change ($\Delta T_I$) and the execution of cross validation. The model is required to proceed directly from observation to conclusion.}
    \label{fig:prompt3}
\end{figure*}

\begin{figure*}[t]
    \centering
    \resizebox{0.9\linewidth}{!}{\includegraphics[width=\linewidth]{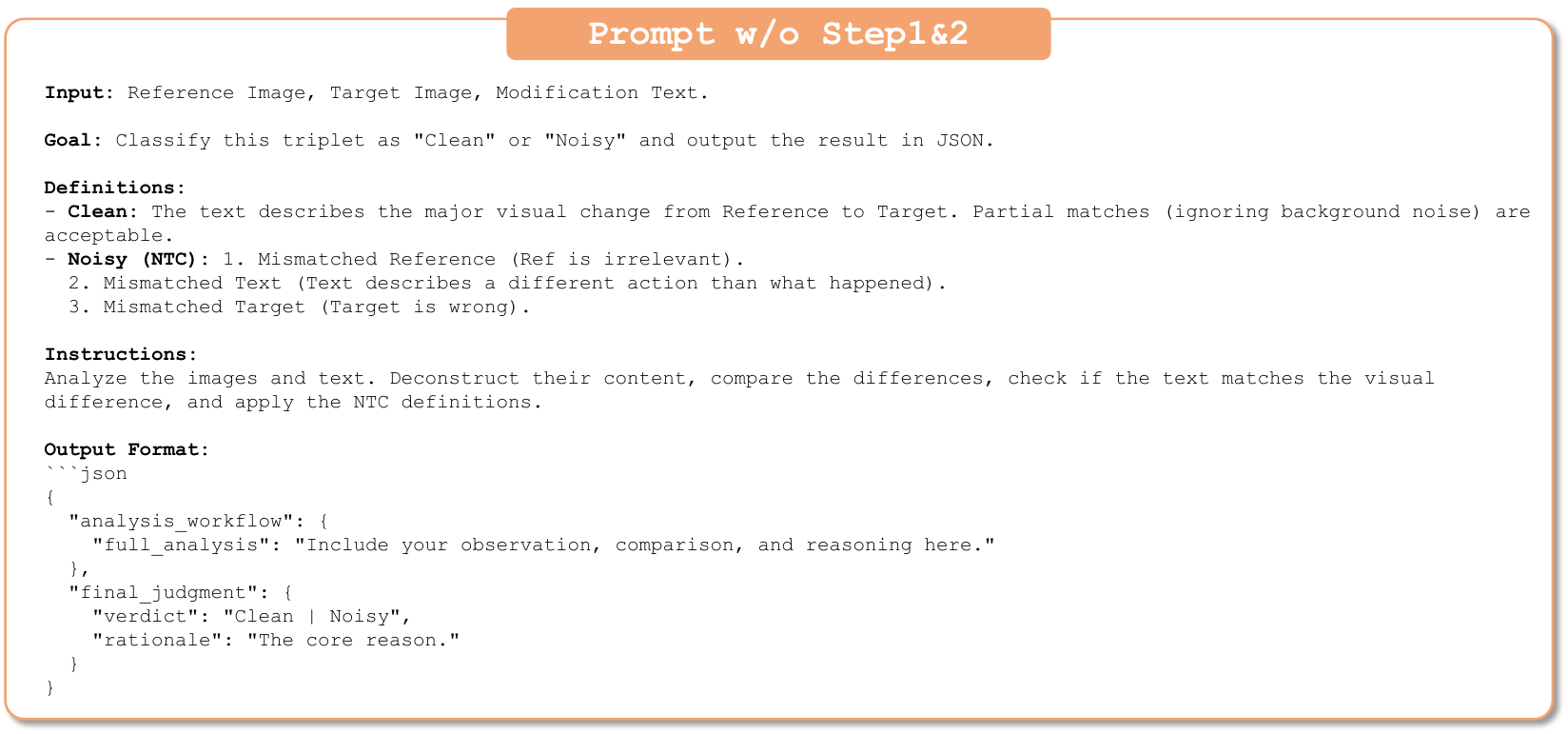}}
    \caption{The end-to-end prompt variant in which both Step 1 and Step 2 are removed. This variant strips away all structured intermediate steps, requiring the model to directly output a binary classification result (Clean/Noisy) based on the raw triplet input.}
    \label{fig:prompt4}
        \vspace{-10pt}
\end{figure*}

\begin{figure*}[t]
    \centering
    \resizebox{0.8\linewidth}{!}{\includegraphics[width=\linewidth]{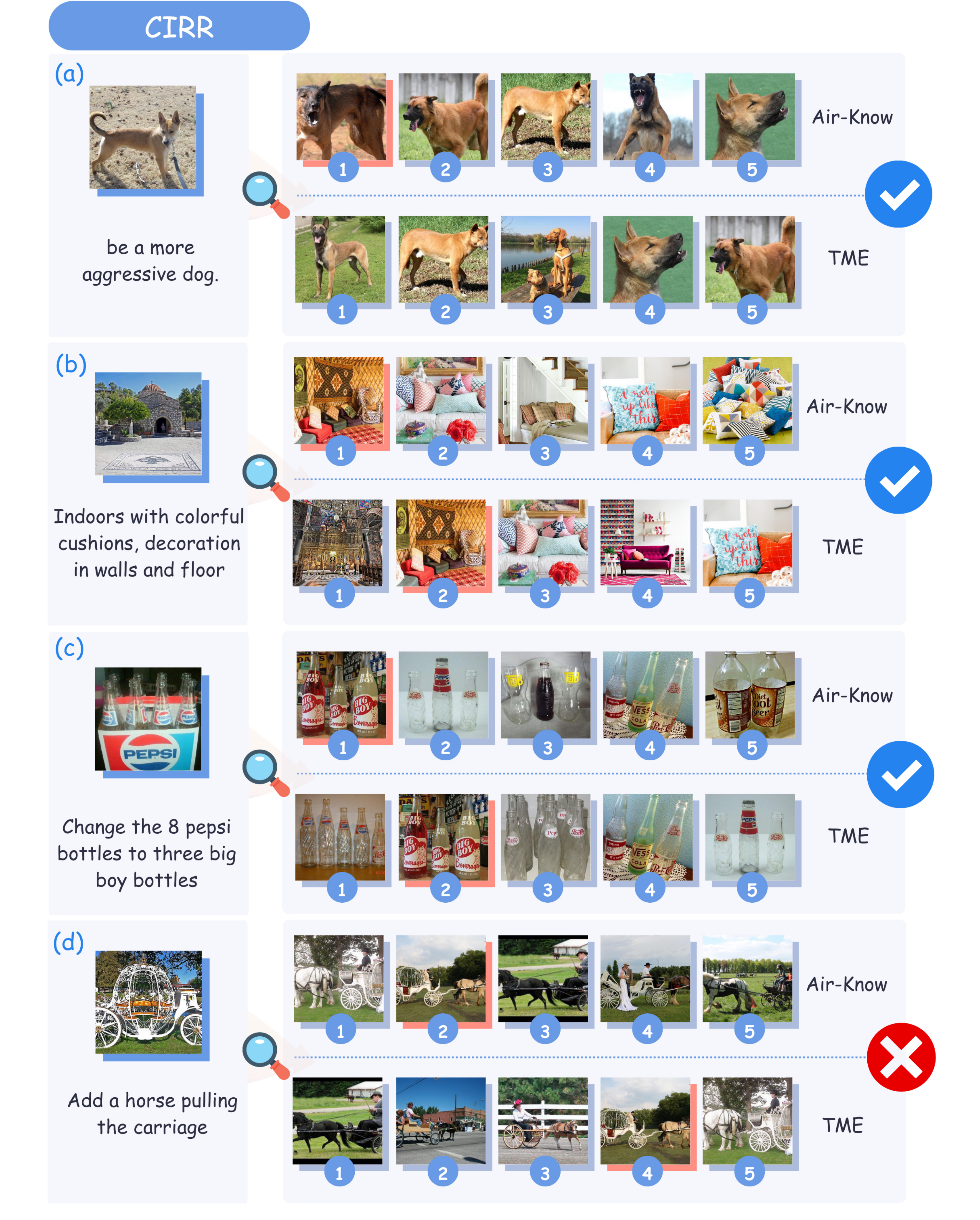}}
    \caption{Qualitative comparison on CIRR. We visualize the retrieval results to demonstrate the performance of the model under different query conditions. The target image is marked with orange shading. In (a-c), Air-Know successfully retrieved the target image at the Top-1 position. And (d) displays a failure case where the target image is not matched.}
    \label{fig:more_case_cirr}
\end{figure*}

\begin{figure*}[t]
    \centering
    \resizebox{0.8\linewidth}{!}{\includegraphics[width=\linewidth]{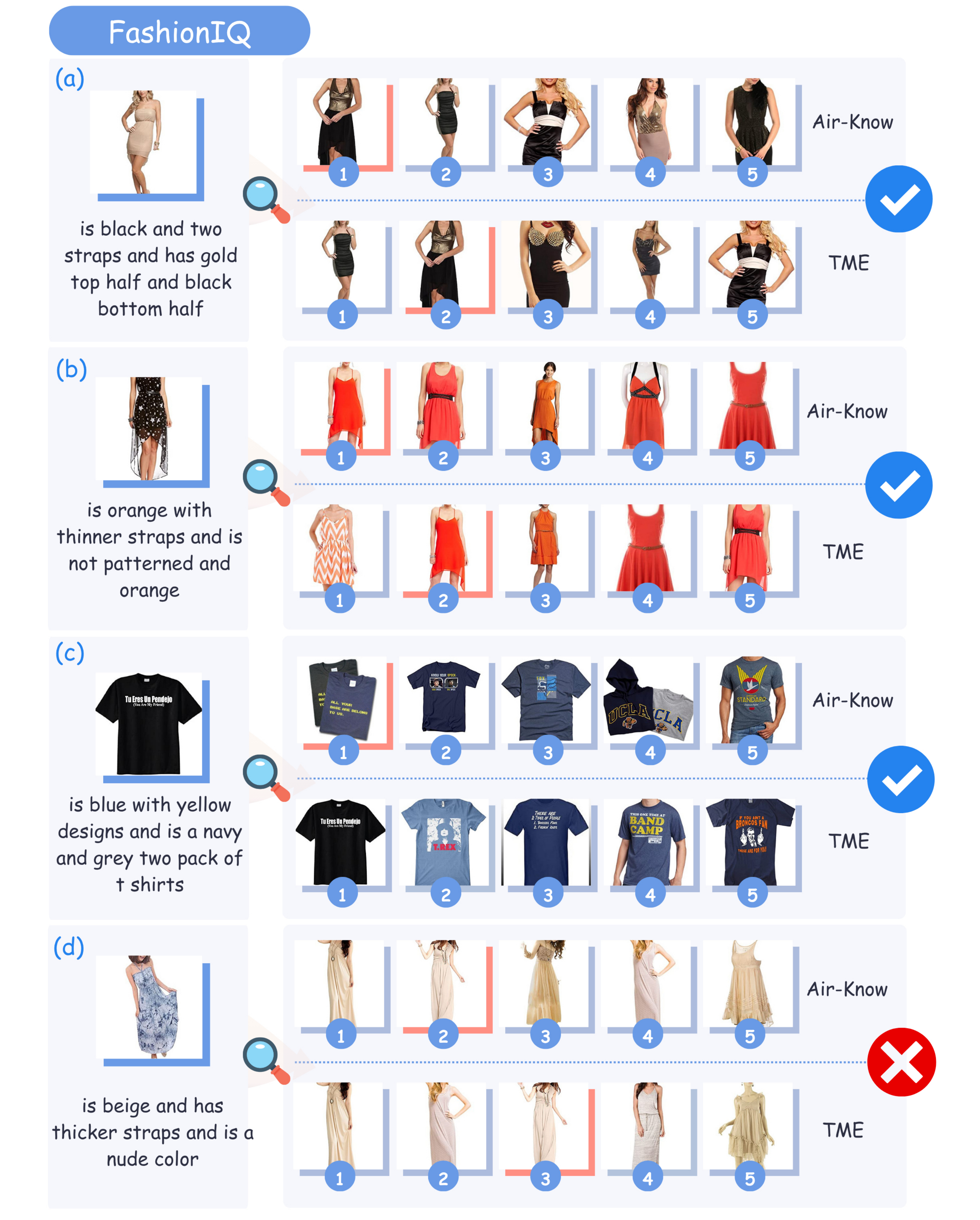}}
    \caption{Qualitative comparison on FashionIQ. We compare Air-Know with TME to demonstrate the retrieval results in scenarios involving fine-grained attribute modification, where the target image is marked with orange shading. In (a)-(c), Air-Know successfully retrieves the target image at the Top-1 position. And (d) presents a failure case where the target image is not matched.}
    \label{fig:more_case_fiq}
\end{figure*}

\clearpage
{
    \small
    \bibliographystyle{ieeenat_fullname}
    \bibliography{main}
}
% \end{CJK}
\end{document}